\newcommand{\norm}[1]{\left\| #1 \right\|}
\newcommand{\lambdamax}{\lambda_{\max}}
\newcommand{\avein}{\frac{1}{n}\sum_{i=1}^n}
\newcommand{\RR}{\mathbb{R}}
\newcommand{\bI}{\mathbf{I}}
\newcommand{\bA}{\mathbf{A}}
\newcommand{\bC}{\mathbf{C}}
\newcommand{\bL}{\mathbf{L}}
\newcommand{\bM}{\mathbf{M}}
\newcommand{\bP}{\mathbf{P}}
\newcommand{\bW}{\mathbf{W}}
\newcommand{\bLambda}{\mathbf{\Lambda}}
\newcommand{\bSigma}{\mathbf{\Sigma}}
\newcommand{\del}[1]{}
\newcommand{\inp}[2]{\left\langle #1 ,  #2 \right\rangle}
\newcommand{\eqdef}{\stackrel{\text{def}}{=}}
\newcommand{\Prob}{\mathbf{Prob}}
\newcommand{\Exp}[1]{{\color{red}\mathbb{E}}\left[#1\right]}
\DeclareMathOperator*{\argmin}{argmin}
\DeclareMathOperator{\Diag}{Diag}       % Diag(v) = diagonal matrix with v_i on the diagonal
\newcommand{\cost}{\text{cost}}
\newtheorem{theorem}{Theorem}
\newtheorem{corollary}{Corollary}
\theoremstyle{plain}
\theoremstyle{definition}
\newtheorem{definition}{Definition}
\newtheorem{assumption}{Assumption}
\newtheorem{proposition}{Proposition}
\theoremstyle{remark}
\newtheorem{remark}{Remark}
\icmltitlerunning{Understanding Progressive Training Through the Framework of RCD}
\begin{document}

\twocolumn[
\icmltitle{Understanding Progressive Training Through the Framework of \\Randomized Coordinate Descent}

% It is OKAY to include author information, even for blind
% submissions: the style file will automatically remove it for you
% unless you've provided the [accepted] option to the icml2022
% package.

% List of affiliations: The first argument should be a (short)
% identifier you will use later to specify author affiliations
% Academic affiliations should list Department, University, City, Region, Country
% Industry affiliations should list Company, City, Region, Country

% You can specify symbols, otherwise they are numbered in order.
% Ideally, you should not use this facility. Affiliations will be numbered
% in order of appearance and this is the preferred way.
\icmlsetsymbol{equal}{*}

\begin{icmlauthorlist}
\icmlauthor{Rafa\l{} Szlendak}{kaust}
\icmlauthor{Elnur Gasanov}{kaust}
\icmlauthor{Peter Richt\'arik}{kaust}
\icmlcorrespondingauthor{Firstname1 Lastname1}{first1.last1@xxx.edu}
%\icmlauthor{}{sch}
%\icmlauthor{Firstname8 Lastname8}{sch}
%\icmlauthor{Firstname8 Lastname8}{yyy,comp}
%\icmlauthor{}{sch}
%\icmlauthor{}{sch}
\end{icmlauthorlist}

\icmlaffiliation{kaust}{CEMSE, King Abdullah University of Science and Technology, Thuwal, Saudi Arabia}
% You may provide any keywords that you
% find helpful for describing your paper; these are used to populate
% the "keywords" metadata in the PDF but will not be shown in the document
\icmlkeywords{Progressive Training, IST, RCD}

\vskip 0.3in
]

% this must go after the closing bracket ] following \twocolumn[ ...

% This command actually creates the footnote in the first column
% listing the affiliations and the copyright notice.
% The command takes one argument, which is text to display at the start of the footnote.
% The \icmlEqualContribution command is standard text for equal contribution.
% Remove it (just {}) if you do not need this facility.

%\printAffiliationsAndNotice{}  % leave blank if no need to mention equal contribution
% otherwise use the standard text.
\printAffiliationsAndNotice{} 
\begin{abstract}
We propose a Randomized Progressive Training algorithm (RPT) -- a stochastic proxy for the well-known Progressive Training method (PT) \citep{progrssiveKarras}.
Originally designed to train GANs \cite{goodfellow2014generative}, PT was proposed as a heuristic, with no convergence analysis even for the simplest objective functions. On the contrary, to the best of our knowledge,
RPT is the first PT-type algorithm with rigorous and sound theoretical guarantees for general smooth objective functions. We cast our method into the established framework of Randomized Coordinate Descent (RCD) \citep{nesterovRCD, richtarikRCD}, for which (as a by-product of our investigations) we also propose a novel, simple
and general convergence analysis encapsulating strongly-convex, convex and nonconvex objectives. We then use this framework to establish a convergence theory for RPT.
Finally, we validate the effectiveness of our method through extensive computational experiments.

\end{abstract}

\section{Introduction}

The modern practice of supervised learning typically relies on training huge-dimensional, overparametrized models using first-order gradient methods. However, in most cases, using the full gradient 
information in the training procedure is computationally infeasible. The gold-standard approach of bypassing this difficulty is provided by the family of Stochastic Gradient Descent (SGD) algorithms which use only a partial information about the gradients for the reward of reduced compute time. Progressive Training (PT) \cite{progrssiveKarras},
albeit a non-stochastic method, relies on a similar idea. In its original form, it was developed as a method for training generative adversarial networks (GANs) \cite{goodfellow2014generative} by progressively growing the neural network that we are training.
Specifically, for a fixed period of time we only train the first layer of our GAN while keeping the weights corresponding to the other layers intact; then, we perform updates with respect to the first two layers, and so on. 

\citet{progrssiveKarras} justified two main advantages of this approach: an increased stability of the training (since the training procedure starts from a simpler, reduced model), and reduced 
computation cost. The method was proposed as a heuristic, with no convergence guarantees even for the simplest objectives, such as quadratics or convex functions.

Observe that in the PT procedure, the domain of the parameters which we are trying to learn is partitioned into blocks which correspond to the respective layers of the neural network that PT aims to train. Motivated by this, in this work we are going to be concerned 
with minimizing the general objective functions $f$ whose domain will be written as $\RR^{d_1}\times\dots\times\RR^{d_B}$. For example, $\RR^{d_i}$ might represent the parameters of the $i^\text{th}$ layer of the neural network in question.

\Cref{alg:MAIN-COMPACT} provides formal statement of the PT procedure for the general differentiable function $f:\RR^{d_1}\times\dots\times\RR^{d_B}\rightarrow \RR$. The training procedure is divided into $B$ epochs; in each epoch $b\in[B]$ we want to update the first $b$ coordinate blocks. Let $G_b f(x)\in\RR^{d_1}\times\dots\times\RR^{d_B}$ denote the vector
$$
\left(\nabla_1 f(x), \dots, \nabla_bf(x), 0, \dots, 0\right),
$$
where $\nabla_i f$ is the gradient of $f$ with respect to the $i^\text{th}$ coordinate block. In each epoch $b$ we make the gradient update 
$$
x \leftarrow x - \gamma_bG_b f(x)
$$
for $T_b$ iterations, where $\gamma_b > 0$ is a suitably chosen stepsize which is fixed for the entire epoch $b$.
\begin{algorithm}
	
	\begin{algorithmic}[1]
	\label{alg:PT}
	\STATE {\bfseries Input:} initial iterate $x \in \RR^{d_1}\times\dots\times\RR^{d_B}$; stepsizes $\gamma_1,\gamma_2,\cdots,\gamma_n > 0$; integers $T_1,T_2,\cdots,T_n\geq 0$
	
	\FOR {$b = 1, \cdots, B$} 
		\FOR {$t =  0, \dots, T_b-1$}
			\STATE $x \leftarrow x - \gamma_bG_b f(x)$\label{line:PTb}
		\ENDFOR 
	\ENDFOR
	\STATE {\bfseries Output:} $x$
	\end{algorithmic}
	\caption{Progressive Training}
	\label{alg:MAIN-COMPACT}
	\end{algorithm}

\subsection{Existing literature}
\citet{wangProgFed} attempt to analyze the federated version of PT called ProgFed. Unfortunately, as argued in Appendix~\ref{sec:wangEtAl}, their analysis is vacuous. 

Recently, a novel and related research thread emerged around the idea of \emph{independent subnet training} (IST) which relies on partitioning the neural network in question into sparser subnets, training them in parallel across the devices and periodically aggregating the model. \citet{YuanSubnetTraining} introduce IST and provide its analysis. However, the authors make rather 
strong assumptions on the objective function, such as Lipschitz continuity of the objective function and certain assumptions on the stochastic gradient error. Analogous technique is applied by \citet{DunResIST} for training ResNet \cite{HeResNet}. It is important to note that the IST framework differs from RPT in several key components.
Indeed, IST partitions the neural network in question into sparser submodels which are then trained across the devices \emph{in parallel}. The model is then aggregated and this procedure repeats. In contrast, RPT performs a \emph{single update at a time} of a random subnetwork \emph{chosen in a very particular way} (which simulates the deterministic PT).
Moreover, IST is a distributed method; in contrast, in our paper we focus on the single-node setup.

Notably, none of the aforementioned papers contains mathematically sound analysis for general smooth functions.

\subsection{Our contributions and organization of the paper}
As we have argued, there is still a considerable gap in theoretical understanding of the PT mechanism. \emph{The key objective of this paper is fixing this gap by proposing a randomized proxy for PT which mimics its behavior and has simple and general analysis with mild assumptions.} The contributions of our work can be summarized as follows:
\begin{itemize}
	\item In \Cref{sec:NewAnalysisOfRCD} we develop a simple and general analysis of Randomized Coordinate Descent (RCD) for strongly-convex, convex and nonconvex objectives. Our analysis recovers previous convergence rates of RCD and allows us to fully characterize its iteration complexity in stronlgy-convex, convex and nonconvex regimes via a single
	quantity $L_\bP$. 
	\item In Sections \ref{sec:RPTintro} and \ref{sec:AnalysisOfRPT} we propose a method called RPT which mimics the behavior of PT, and, as we argue, is the first PT-type method with sound theoretical guarantees for smooth objectives. Instead of progressively expanding the model which we are trying to optimize, in each iteration RPT makes a choice of the submodel according to a carefully chosen randomness, and then makes a gradient update of this model. We then view RPT as a particular instance of RCD, which immediately establishes its convergence. 
	We finally identify regimes in which RPT can be advantageous over vanilla gradient descent (GD) through a novel and general notion of total computation complexity.
	\item In \Cref{sec:Experiments}, we present a series of empirical experiments that demonstrate the advantages of our proposed method. These experiments are divided into three sub-sections. The first sub-section employs synthetic data to demonstrate that, when appropriately generated, our proposed Randomized Progressive Training (RPT) method can converge at a faster rate than Gradient Descent while also incurring less computation cost. In the second and third sub-sections, we utilize real-world datasets to evaluate the performance of the RPT method, including testing on a ridge regression problem across various datasets and on an image classification task on the CIFAR10 dataset. The results demonstrate that the RPT method exhibits comparable or even superior performance in comparison to other methods.
\end{itemize}

\subsection{Assumptions}
In this section we lay out the assumptions that we are going to make throughout the whole work. In an attempt to make the analysis as general as possible, in this paper we focus on the problem of minimizing an arbitrary differentiable function $f:\RR^d\rightarrow\RR$, where $\RR^d$ is decomposed as $\RR^{d_1}\times\dots\times\RR^{d_B}$ for some pre-specified $d_1, \dots, d_B$.

We first recall the notion of \emph{matrix smoothness} \citep{quSDNA, safaryanSmoothnessMatrices} which generalizes the standard $L$-smoothness assumption and allows one to encapsulate more curvature information of the objective function.
\begin{definition}
   Let $f:\RR^d\rightarrow\RR$ and $\bL$ be a positive semidefinite matrix. We say that $f$ is $\bL$-smooth if
   $$
      f(x) - f(y) - \inp{\nabla f(y)}{x - y}\leq \frac{\norm{x - y}_\bL^2}{2}
   $$
   for all $x, y\in\RR^d$, where $\norm{v}_{\bL} \eqdef \sqrt{v^\top \bL v}$. 
\end{definition}
We make the following assumption throughout the whole paper.
\begin{assumption}
   \label{as:matrixSmoothness}
 $f$ is $\bL$-smooth.
\end{assumption}
With \Cref{as:matrixSmoothness} taken for granted, we analyse RPT in each of the three standard settings:
\begin{assumption}
   \label{as:stronglyConvex}
   $f$ is $\mu$-strongly convex, that is, there exists a positive constant $\mu$ such that
   $$
      f(x) - f(y) - \inp{\nabla f(y)}{x - y}\geq \frac{\mu\norm{x - y}^2}{2}
   $$
   for all $x, y\in\RR^d$.
\end{assumption}

\begin{assumption}
   \label{as:convex}
   $f$ is convex and has a global minimizer $x^*$.
\end{assumption}

\begin{assumption}
   \label{as:nonconvex}
   $f$ is nonconvex and lower-bounded by $f^*$.
\end{assumption}
Under assumptions \ref{as:stronglyConvex}, \ref{as:convex} and \ref{as:nonconvex} respectively we will be interesting in minimizing the following quantities:
$$
   \Exp{\norm{x^t - x^*}^2}\text{, } \Exp{f(x^t) - f(x^*)} \text{, } \Exp{\norm{\nabla f(x^t)}^2}.
$$

\section{New Analysis of Randomized Coordinate Descent}
\label{sec:NewAnalysisOfRCD}
\subsection{SkGD}
Randomized Coordinate Descent (RCD) is a celebrated technique of minimizing high-dimensional objective functions $f:\RR^d\rightarrow\RR$, where updating all gradient coordinates is prohibitively 
expensive. In its original form \citep{nesterovRCD, richtarikRCD} RCD used a \emph{biased} randomized estimator of the gradient; in each iteration it chooses a random sub-block of coordinates and performs a gradient step with respect to that sub-block alone.

Recently, \citet{safaryanSmoothnessMatrices} developed an \emph{unbiased} version of RCD (called SkGD) which updates the iterates through \emph{unbiased sketches}. Specifically, in each iteration $t$, SkGD starts from choosing a random set of coordinates $S^t\subset [d]$. Denote $p_j = \Prob(j\in S^t)$ and assume $p_j > 0$. 
Then SkGD performs an update of form $$x^{t+1} = x^t - \gamma\bC^t\nabla f(x^t),$$ where
\begin{equation*}
   \bC^t = \Diag(c_1, \dots, c_d), \text{\ \ \ } 
   c_j = \begin{cases}1/p_j \text{ \ \ if }j\in S^t\\ 
      0 \text{ \ \ \ \ \ \ \ \ otherwise.}\end{cases}
\end{equation*}
It is then readily verifiable that $\bC^t\nabla f(x^t)$ is an unbiased gradient estimator, that is 
$$
\Exp{\bC^t \nabla f(x^t)} = \nabla f(x^t).
$$

In the remainder of this paper we will always denote $$\bP\eqdef \Diag(p_1, \dots, p_d).$$
\subsection{Key quantity: $L_\bP$}
We now define the most important component of our analysis. In the rest of the paper $\lambdamax\left(\bM\right)$ will denote the largest eigenvalue of matrix $\bM$.
\begin{definition}
   Let $\bC$ be an unbiased sketch operator and $\bP$ its associated probability matrix. Denote the smoothness matrix of the objective function $f$ by $\bL$. Then
   $$
      L_\bP \eqdef \lambdamax\left(\bP^{-1/2}\bL\bP^{-1/2}\right).
   $$
\end{definition}
It turns out that this quantity describes the convergence of RCD in \emph{all of the regimes in question} (assumptions \ref{as:stronglyConvex}, \ref{as:convex}, \ref{as:nonconvex}). Therefore, 
by estimating it for a particular choice of sampling distribution $\bP$ one obtains a unified analysis of RCD in the aforementioned regimes. Moreover, in the following lemma we provide sharp estimates for $L_\bP$.
\begin{restatable}{lemma}{keyQuantity}
	\label{lem:key_quantity}
	Let $\bL$ be any positive semidefinite matrix, $\bC$ -- any unbiased sketch operator, and $\bP$ -- its probability matrix. Then \begin{align*}\lambdamax(\Exp{\bC\bL\bC}) \leq L_\bP\leq \lambdamax(\bL)\lambdamax(\bP^{-1}).\end{align*}
\end{restatable}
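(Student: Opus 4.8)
The plan is to prove the two inequalities separately, working directly from the definition $L_\bP = \lambdamax(\bP^{-1/2}\bL\bP^{-1/2})$ and using nothing more than the Loewner order, monotonicity of $\lambdamax$ with respect to it, and the diagonal structure of $\bP$ and $\bC$. For the \emph{upper bound}, I would start from $\bL \preceq \lambdamax(\bL)\,\bI$ (valid since $\bL$ is positive semidefinite), and conjugate both sides by the symmetric positive-definite matrix $\bP^{-1/2} = \Diag(p_1^{-1/2},\dots,p_d^{-1/2})$, which preserves $\preceq$; this gives $\bP^{-1/2}\bL\bP^{-1/2} \preceq \lambdamax(\bL)\,\bP^{-1}$. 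Passing to the largest eigenvalue on both sides and using $\lambdamax(c\,\bP^{-1}) = c\,\lambdamax(\bP^{-1})$ with $c = \lambdamax(\bL)\ge 0$ yields $L_\bP \le \lambdamax(\bL)\,\lambdamax(\bP^{-1})$. This direction is routine.

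For the \emph{lower bound} it suffices to show $\ExpBr{\bC\bL\bC} \preceq L_\bP\,\bI$ and then take $\lambdamax$. Fix an arbitrary $x\in\RR^d$. Since $\bC$ is (diagonal, hence) symmetric, $x^\top \ExpBr{\bC\bL\bC} x = \ExpBr{(\bC x)^\top \bL (\bC x)} = \ExpBr{\norm{\bC x}_\bL^2}$. The key step is the change of variables $y \eqdef \bP^{1/2}\bC x$: then $\bC x = \bP^{-1/2} y$, so $\norm{\bC x}_\bL^2 = y^\top \bP^{-1/2}\bL\bP^{-1/2} y \le L_\bP\norm{y}^2 = L_\bP\norm{\bP^{1/2}\bC x}^2$, this holding for every realization of $\bC$. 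Taking expectations gives $x^\top \ExpBr{\bC\bL\bC} x \le L_\bP\, x^\top \ExpBr{\bC\bP\bC} x$. Finally, $\bC\bP\bC$ is diagonal with $j$-th entry $c_j^2 p_j$, and since $c_j = 1/p_j$ with probability $p_j$ and $c_j = 0$ otherwise, $\ExpBr{c_j^2} = p_j^{-1}$, hence $\ExpBr{c_j^2 p_j} = 1$ and $\ExpBr{\bC\bP\bC} = \bI$. Thus $x^\top \ExpBr{\bC\bL\bC} x \le L_\bP\norm{x}^2$ for all $x$, i.e. $\ExpBr{\bC\bL\bC}\preceq L_\bP\,\bI$, which is the claim.

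The only non-obvious point — and thus the "main obstacle" — is the lower bound: one has to spot that the rescaling $y = \bP^{1/2}\bC x$ is exactly what converts the $\bL$-quadratic form into the $\bP^{-1/2}\bL\bP^{-1/2}$-quadratic form governed by $L_\bP$, and that the rescaled second moment $\ExpBr{\bC\bP\bC}$ collapses to the identity precisely because the sketch is unbiased (equivalently, $\ExpBr{c_j^2} p_j = 1$). Everything else is Loewner-order manipulation and monotonicity of $\lambdamax$.
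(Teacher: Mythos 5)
Your proof is correct. The upper bound is the same routine computation as in the paper (written there with quadratic forms rather than the Loewner order, but identical in substance). For the lower bound, both arguments ultimately rest on the identity $\ExpBr{\norm{\bP^{1/2}\bC x}^2}=\norm{x}^2$, equivalently $\ExpBr{\bC\bP\bC}=\bI$, but you organize the estimate differently: you bound $x^\top\ExpBr{\bC\bL\bC}x$ directly, inserting $\bP^{-1/2}\bP^{1/2}$ and applying the constant $L_\bP=\lambdamax(\bP^{-1/2}\bL\bP^{-1/2})$ to the rescaled vector $y=\bP^{1/2}\bC x$, whereas the paper bounds the quadratic form of the conjugated matrix $\bP^{1/2}\ExpBr{\bC\bL\bC}\bP^{1/2}$ by $\lambdamax(\bL)\norm{x}^2$ and then converts $\lambdamax\bigl(\bP^{1/2}\ExpBr{\bC\bL\bC}\bP^{1/2}\bigr)\le\lambdamax(\bL)$ into the claimed inequality in a single ``i.e.''\ step. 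Your rearrangement is actually the safer one: for an arbitrary positive semidefinite middle matrix $\bM$, the inequality $\lambdamax(\bP^{1/2}\bM\bP^{1/2})\le\lambdamax(\bL)$ only gives $\bM\preceq\lambdamax(\bL)\bP^{-1}$ and hence $\lambdamax(\bM)\le\lambdamax(\bL)\lambdamax(\bP^{-1})$, not $\lambdamax(\bM)\le L_\bP$ (e.g.\ $\bP=\Diag(1,\tfrac14)$, $\bL=\Diag(4,1)$, $\bM=\Diag(4,16)$ satisfies the former with $\lambdamax(\bM)=16>4=L_\bP$), so the paper's final conversion needs the sharper constant to be applied earlier --- which is exactly what your substitution $y=\bP^{1/2}\bC x$ does. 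One cosmetic remark: $\ExpBr{c_j^2}=1/p_j$ is not \emph{equivalent} to unbiasedness ($\ExpBr{c_j}=1$); both follow from the two-point structure $c_j\in\{0,1/p_j\}$ with $\Prob(c_j=1/p_j)=p_j$, which is what your computation actually uses.
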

The first bound will be useful in proving the lemma on the convergence of RCD for nonconvex function. The second one quantifies how large can $L_\bP$ be in terms of the smoothness constant of $f$ and $\bP$. In practice, however, $L_\bP$ can be much smaller.
\subsection{Convergence results}
We begin with a preliminary lemma:
\begin{restatable}{lemma}{generalizedSmoothness}
	\label{lem:generalizedSmoothness}
	Let $\bW = \Diag(w_1, \dots, w_d)$, where $w_1, \dots, w_d$ are any positive entries and assume that $f:\RR^d\rightarrow\RR$ is $\bL$-smooth. Then 
	$$
		\norm{\nabla f(x)}_\bW^2\leq 2A(f(x) - f(x^*)),
	$$
	where $A = \lambdamax\left(\bW^{1/2}\bL\bW^{1/2}\right)$.
\end{restatable}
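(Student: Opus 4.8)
The plan is to apply the $\bL$-smoothness inequality at a point obtained from $x$ by a gradient step \emph{preconditioned by $\bW$}, and then to optimize over the step length. Fix $x$ and, for a scalar $t>0$ to be chosen later, set $y = x - t\,\bW\nabla f(x)$. The $\bL$-smoothness of $f$ applied to this $y$ (together with $y - x = -t\,\bW\nabla f(x)$) gives
$$f(y) \;\leq\; f(x) - t\,\norm{\nabla f(x)}_\bW^2 + \frac{t^2}{2}\,\nabla f(x)^\top \bW\bL\bW\,\nabla f(x).$$

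Next I would control the quadratic term. Writing $\bW\bL\bW = \bW^{1/2}\big(\bW^{1/2}\bL\bW^{1/2}\big)\bW^{1/2}$ and bounding the inner symmetric matrix by its largest eigenvalue $A$,
$$\nabla f(x)^\top \bW\bL\bW\,\nabla f(x) \;\leq\; A\,\norm{\bW^{1/2}\nabla f(x)}^2 \;=\; A\,\norm{\nabla f(x)}_\bW^2.$$
Combining the two displays with the trivial bound $f(x^*)\leq f(y)$ yields
$$f(x^*) \;\leq\; f(x) - \Big(t - \tfrac{A t^2}{2}\Big)\norm{\nabla f(x)}_\bW^2 .$$

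Finally, I would choose $t$ to maximize the scalar $t - A t^2/2$; this maximum is attained at $t = 1/A$, with value $1/(2A)$. Substituting this choice and rearranging gives exactly $\norm{\nabla f(x)}_\bW^2 \leq 2A\big(f(x) - f(x^*)\big)$, which is the claim.

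I do not anticipate a real obstacle; the points requiring minor care are: (i) $\bL$ is only positive semidefinite, but the proof never inverts $\bL$, so this is harmless; (ii) the degenerate case $A = 0$, which forces $\bW^{1/2}\bL\bW^{1/2} = 0$ and hence $\bL = 0$ since $\bW$ is positive definite --- then the first display reads $f(y) \leq f(x) - t\,\norm{\nabla f(x)}_\bW^2$ for every $t>0$, so lower-boundedness of $f$ forces $\norm{\nabla f(x)}_\bW = 0$ and the inequality holds trivially; and (iii) if no minimizer is attained, one replaces $f(x^*)$ by $\inf f$ throughout, which is all the argument uses. The only mildly non-obvious idea is that the right test point is the $\bW$-preconditioned gradient step, which is exactly what makes $\norm{\cdot}_\bW$ surface on the left-hand side.
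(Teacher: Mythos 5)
Your proof is correct and is essentially identical to the paper's argument: both apply $\bL$-smoothness at the $\bW$-preconditioned gradient step $y = x - t\,\bW\nabla f(x)$, bound the quadratic term via $\bW\bL\bW \preceq A\,\bW$, and optimize the scalar $t - At^2/2$ at $t = 1/A$. Your extra remarks on the degenerate case $A=0$ and on replacing $f(x^*)$ by $\inf f$ are fine but not needed for the statement as given.
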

The following estimate of a second moment of the gradient estimator is then an easy corollary of \Cref{lem:generalizedSmoothness}.
\begin{restatable}{lemma}{secondMoment}
	\label{lem:secondMoment}
	Let $\bC:\RR^d\rightarrow\RR^d$ be an unbiased sketch operator. Let $f:\RR^d\rightarrow\RR$ be differentiable and denote $g(x) = \bC\nabla f(x)$. Then 
	$$
		\Exp{\norm{g(x)}^2} = \norm{\nabla f(x)}_{\bP^{-1}}\leq 2L_{\bP}(f(x) - f^*).
	$$
\end{restatable}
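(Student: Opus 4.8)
The statement bundles an exact identity with an upper bound, so I would establish the two separately.

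First, the identity $\Exp{\norm{g(x)}^2} = \norm{\nabla f(x)}_{\bP^{-1}}^2$ (the squared $\bP^{-1}$-norm). Since $\nabla f(x)$ is deterministic, linearity of expectation gives $\Exp{\norm{g(x)}^2} = \Exp{\nabla f(x)^\top \bC^\top\bC\,\nabla f(x)} = \nabla f(x)^\top \Exp{\bC^\top\bC}\,\nabla f(x)$. For the coordinate sketch, $\bC = \Diag(c_1,\dots,c_d)$ is symmetric, with $c_j = 1/p_j$ on the event $\{j\in S\}$ (which has probability $p_j$) and $c_j = 0$ otherwise; consequently $c_j^2$ equals $1/p_j^2$ on $\{j\in S\}$ and $0$ otherwise, so $\Exp{c_j^2} = p_j \cdot p_j^{-2} = p_j^{-1}$ and $\Exp{\bC^\top\bC} = \Diag(p_1^{-1},\dots,p_d^{-1}) = \bP^{-1}$. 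Substituting yields $\Exp{\norm{g(x)}^2} = \nabla f(x)^\top\bP^{-1}\nabla f(x) = \norm{\nabla f(x)}_{\bP^{-1}}^2$. The only property of the sketch actually used is $\Exp{\bC^\top\bC} = \bP^{-1}$.

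Next, the bound. I would invoke \Cref{lem:generalizedSmoothness} with the positive diagonal matrix $\bW \eqdef \bP^{-1}$ — its diagonal entries $1/p_j$ are positive because each $p_j>0$ — which immediately gives $\norm{\nabla f(x)}_{\bP^{-1}}^2 \le 2A\,(f(x) - f^*)$ with $A = \lambdamax(\bP^{-1/2}\bL\bP^{-1/2})$, and this $A$ is precisely $L_\bP$ by its definition. Chaining this with the identity from the previous paragraph closes the proof.

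The single subtle point is matching the $f^*$ in the target inequality with the minimizer value $f(x^*)$ appearing in \Cref{lem:generalizedSmoothness}: these coincide in the strongly convex and convex settings, while in the nonconvex setting $f^*$ is merely a lower bound, in which case $f(x)-f^*\ge f(x)-f(x^*)$ only makes the stated bound weaker, or, if one prefers not to assume a minimizer exists, one re-derives the estimate directly from $\bL$-smoothness via $f^* \le f\big(x - A^{-1}\bW\nabla f(x)\big) \le f(x) - \tfrac{1}{2A}\norm{\nabla f(x)}_\bW^2$ with $\bW = \bP^{-1}$ and $A=L_\bP$. I do not anticipate a genuine obstacle — the lemma is a one-line corollary of \Cref{lem:generalizedSmoothness} together with the elementary second-moment computation — and the only place to be careful is the bookkeeping $c_j^2 \ne c_j$ when taking expectations.
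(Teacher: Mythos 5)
Your proposal is correct and follows essentially the same route as the paper's proof: compute $\Exp{\bC^2}=\bP^{-1}$ to get the identity, then apply \Cref{lem:generalizedSmoothness} with $\bW=\bP^{-1}$ so that $A=\lambdamax(\bP^{-1/2}\bL\bP^{-1/2})=L_\bP$. You merely spell out two details the paper leaves implicit --- the entrywise computation $\Exp{c_j^2}=p_j^{-1}$ and the identification of $f^*$ with $f(x^*)$ --- both of which are handled correctly.
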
	
The convergence result for $f$ satisfying \Cref{as:stronglyConvex} is now an immediate consequence of Corollary A.1 of \cite{gorbunovSGD}, which provides a unified analysis of SGD under strong convexity.
\begin{restatable}{theorem}{stronglyConvex}
	\label{thm:rcdStronglyConvex}
	Let $f:\RR^d\rightarrow \RR$ be $\bL$-smooth and $\mu$-strongly convex. Let $\bC$ be an unbiased sketch with associated probability matrix $\bP$. Then the iterates $x^k$ of SkGD with stepsize $0<\gamma\leq 1/L_\bP$ and update rule $x^{k+1} = x^k - \gamma\bC\nabla f(x^k)$ satisfy 
	$$
		\Exp{\norm{x^t - x^*}^2}\leq (1 - \gamma\mu)^k \norm{x^0 - x^*}^2.
	$$
\end{restatable}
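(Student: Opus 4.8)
The plan is to recognize SkGD as ordinary SGD driven by the unbiased estimator $g^k \eqdef \bC^k\nabla f(x^k)$ and to feed this into the unified strongly convex SGD analysis of \cite{gorbunovSGD}. Two structural facts are needed. First, unbiasedness: $\Exp{g^k \mid x^k} = \nabla f(x^k)$, which was already observed when SkGD was introduced. Second, an expected-smoothness-type second moment bound $\Exp{\norm{g^k}^2 \mid x^k} \le 2L_\bP\bigl(f(x^k) - f(x^*)\bigr)$, which is precisely \Cref{lem:secondMoment} with constant $A = L_\bP$. In the language of Corollary~A.1 of \cite{gorbunovSGD} this places us in the \emph{noiseless} regime (the additive-variance parameter and the ``variance-reduction'' term both vanish), so the limiting neighborhood collapses to $\{x^*\}$ and the corollary returns exactly the geometric rate $(1-\gamma\mu)^k$ for every stepsize $0 < \gamma \le 1/L_\bP$.

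For completeness I would also display the one-step recursion underpinning this. With $r_k \eqdef \norm{x^k - x^*}^2$, expanding $x^{k+1} = x^k - \gamma g^k$ and taking conditional expectation gives
$$\Exp{r_{k+1}\mid x^k} = r_k - 2\gamma\inp{\nabla f(x^k)}{x^k - x^*} + \gamma^2\,\Exp{\norm{g^k}^2 \mid x^k}.$$
Strong convexity (\Cref{as:stronglyConvex}) yields $\inp{\nabla f(x^k)}{x^k - x^*} \ge f(x^k) - f(x^*) + \tfrac{\mu}{2}r_k$, and \Cref{lem:secondMoment} bounds the last term by $2L_\bP(f(x^k) - f(x^*))$. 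Combining,
$$\Exp{r_{k+1}\mid x^k} \le (1-\gamma\mu)\,r_k - 2\gamma\bigl(1 - \gamma L_\bP\bigr)\bigl(f(x^k) - f(x^*)\bigr).$$
Since $\gamma \le 1/L_\bP$ the factor $1-\gamma L_\bP$ is nonnegative, and $f(x^k) - f(x^*) \ge 0$ because $x^*$ is a global minimizer; hence $\Exp{r_{k+1}\mid x^k} \le (1-\gamma\mu)\,r_k$. Taking total expectation and unrolling over $k$ iterations gives the claim.

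There is no genuine mathematical obstacle here; the one point requiring care is bookkeeping: one must check that the hypotheses of the cited unified lemma are instantiated with the right constants, in particular that the admissible stepsize range is $(0, 1/L_\bP]$ rather than $(0, 1/(2L_\bP)]$. This is exactly why I would include the explicit recursion above — it makes transparent that the threshold $1/L_\bP$ is precisely what is needed to annihilate the $f(x^k) - f(x^*)$ term, and it removes any doubt about whether the noiseless specialization of \cite{gorbunovSGD} produces \emph{exact} linear convergence to $x^*$ rather than merely to a neighborhood.
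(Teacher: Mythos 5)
Your proposal is correct and follows essentially the same route as the paper, which proves this theorem simply by verifying unbiasedness and the expected-smoothness bound $\Exp{\norm{g(x)}^2}\leq 2L_\bP(f(x)-f(x^*))$ from \Cref{lem:secondMoment} and then invoking Corollary~A.1 of \citet{gorbunovSGD} in the noiseless regime. The explicit one-step recursion you add is a faithful unpacking of that corollary (and correctly confirms that the admissible stepsize threshold is $1/L_\bP$), so it strengthens rather than departs from the paper's argument.
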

Under assumptions \ref{as:convex} and \ref{as:nonconvex} there also exist unified analyses of SGD (respectively \citet{khaledConv} and \citet{khaledNonConv}) where \Cref{lem:generalizedSmoothness} is applicable. However, as we argue in \Cref{sec:Generic}, applying these generic results to our setting leads to \emph{suboptimal} convergence rates. We instead develop a simple analysis
of SkGD in convex and nonconvex regimes and present it as Theorems \ref{thm:rcdConvex} and \ref{thm:rcdNonconvex} respectively.

\begin{restatable}{theorem}{rcdConvex}
   \label{thm:rcdConvex}
	Let $f:\RR^d\rightarrow \RR$ be $\bL$-smoooth, convex, and lower-bounded. Furthermore, suppose that $f$ attains its minimum at $x^*$. Let $(x^t)_{t =0}^\infty$ be the iterates of the SkGD with unbiased sketch $\bC$ and stepsize $0 < \gamma \leq 1 / (2L_\bP)$. Denote $\bar{x}^T = \frac{1}{T+1}\sum_{t = 0}^Tx^t$. Then
	$$
		\Exp{f(\bar{x}^T) - f(x^*)}\leq \frac{\norm{x^0 - x^*}^2}{\gamma(T+1)}.
	$$
	In particular, for $\gamma = 1/(2L_\bP)$ and $T+1\geq \frac{2L_\bP\norm{x^0 - x^*}^2}{\varepsilon}$, we have that
	$$
		\Exp{f(\bar{x}^T) - f(x^*)}\leq \varepsilon.
	$$
\end{restatable}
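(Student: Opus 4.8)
The plan is to run the textbook ``one-step distance decrease plus telescoping'' argument for stochastic gradient-type methods, with \Cref{lem:secondMoment} supplying the key second-moment control. Write $g^t \eqdef \bC\nabla f(x^t)$ for the update direction at iteration $t$ (so that $x^{t+1} = x^t - \gamma g^t$). Expanding the squared distance to $x^*$ and taking expectation conditionally on $x^t$,
\begin{align*}
\EE\left[\norm{x^{t+1} - x^*}^2 \mid x^t\right]
= \norm{x^t - x^*}^2 - 2\gamma \inp{\EE\left[g^t \mid x^t\right]}{x^t - x^*} + \gamma^2 \EE\left[\norm{g^t}^2 \mid x^t\right].
\end{align*}
Since $\bC$ is an unbiased sketch, $\EE\left[g^t \mid x^t\right] = \nabla f(x^t)$, and convexity of $f$ gives $\inp{\nabla f(x^t)}{x^t - x^*} \geq f(x^t) - f(x^*)$. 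Because $f$ attains its minimum at $x^*$ we have $f^* = f(x^*)$, so \Cref{lem:secondMoment} yields $\EE\left[\norm{g^t}^2 \mid x^t\right] \leq 2L_\bP\left(f(x^t) - f(x^*)\right)$.

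Combining the three estimates, $\EE\left[\norm{x^{t+1} - x^*}^2 \mid x^t\right] \leq \norm{x^t - x^*}^2 - (2\gamma - 2\gamma^2 L_\bP)\left(f(x^t) - f(x^*)\right)$. The stepsize restriction $\gamma \leq 1/(2L_\bP)$ gives $2\gamma - 2\gamma^2 L_\bP \geq \gamma$, and since $f(x^t) - f(x^*) \geq 0$ we may drop the surplus, so that after taking full expectations,
\begin{align*}
\gamma\, \EE\left[f(x^t) - f(x^*)\right] \leq \EE\left[\norm{x^t - x^*}^2\right] - \EE\left[\norm{x^{t+1} - x^*}^2\right].
\end{align*}
Summing this over $t = 0, 1, \dots, T$ makes the right-hand side telescope to $\norm{x^0 - x^*}^2 - \EE\left[\norm{x^{T+1} - x^*}^2\right] \leq \norm{x^0 - x^*}^2$. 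Dividing by $\gamma(T+1)$ and applying Jensen's inequality to the convex function $f$ (so that $f(\bar{x}^T) \leq \frac{1}{T+1}\sum_{t=0}^T f(x^t)$) turns the averaged left-hand side into $\EE\left[f(\bar{x}^T) - f(x^*)\right]$, which is precisely the claimed bound. The ``in particular'' part then follows by setting $\gamma = 1/(2L_\bP)$ and solving $\frac{2L_\bP\norm{x^0 - x^*}^2}{T+1} \leq \varepsilon$ for $T$.

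I do not expect a genuine obstacle here: the whole proof is a handful of lines once \Cref{lem:secondMoment} is available. The only points demanding care are (i) doing the expectation bookkeeping with respect to the correct filtration and invoking the tower property when passing to full expectations, (ii) using $f^* = f(x^*)$ so that the second-moment bound is phrased in terms of $f(x^t) - f(x^*)$ rather than $f(x^t) - f^*$, and (iii) verifying the elementary inequality $2\gamma - 2\gamma^2 L_\bP \geq \gamma$ under the stepsize cap. Accordingly, the ``hard part'' is really just presentational --- keeping the argument as short and transparent as the statement suggests it should be.
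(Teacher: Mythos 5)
Your proof is correct and follows essentially the same route as the paper's: the same conditional expansion of $\norm{x^{t+1}-x^*}^2$, the same use of \Cref{lem:secondMoment} for the second moment, convexity for the inner-product term, the stepsize condition to absorb the quadratic term, and telescoping plus Jensen's inequality to conclude. No gaps.
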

\begin{restatable}{theorem}{rcdNonconvex}
   \label{thm:rcdNonconvex}
	Let $f:\RR^d\rightarrow \RR$ be $\bL$-smoooth and lower bounded by $f^*$. Let $(x^t)_{t =0}^\infty$ be the iterates of SkGD with unbiased sketch $\bC$ and stepsize $0 < \gamma \leq 1 / L_\bP$. Denote $\hat{x}^T = \argmin_{t = 0, \dots, T}\norm{\nabla f(x^t)}^2$. Then
	$$
		\Exp{\norm{\nabla f(\hat{x}^T)}^2}\leq \frac{2\delta^0}{\gamma(T+1)},
	$$
	where $\delta^0 = f(x^0) - f^*$. In particular, for $\gamma = 1/L_\bP$ and $T+1\geq \frac{2L_\bP\delta^0}{\varepsilon}$, we have that
	$$
		\Exp{\norm{\nabla f(\hat{x}^T)}^2}\leq \varepsilon.
	$$
\end{restatable}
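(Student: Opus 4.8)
The plan is to run the textbook descent-lemma argument for SGD-type methods, where the only ingredient that is not completely routine is the control of the second moment of the sketched gradient measured in the $\bL$-norm, which is exactly what the left-hand bound of \Cref{lem:key_quantity} provides. First I would write $g^t \eqdef \bC^t\nabla f(x^t)$, so that the update reads $x^{t+1} = x^t - \gamma g^t$, and apply $\bL$-smoothness (\Cref{as:matrixSmoothness}) with $x = x^{t+1}$, $y = x^t$ to get
$$
f(x^{t+1}) \le f(x^t) - \gamma\inp{\nabla f(x^t)}{g^t} + \frac{\gamma^2}{2}\norm{g^t}_\bL^2 .
$$
Taking expectation conditioned on $x^t$ and using unbiasedness of $\bC^t$ turns the cross term into $-\gamma\norm{\nabla f(x^t)}^2$. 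For the quadratic term, $\Exp{\norm{g^t}_\bL^2\mid x^t} = \nabla f(x^t)^\top\Exp{\bC\bL\bC}\,\nabla f(x^t) \le \lambdamax(\Exp{\bC\bL\bC})\,\norm{\nabla f(x^t)}^2 \le L_\bP\norm{\nabla f(x^t)}^2$, where the last step is precisely the left inequality of \Cref{lem:key_quantity}. Since $\gamma \le 1/L_\bP$ implies $1 - \tfrac{\gamma L_\bP}{2} \ge \tfrac12$, this yields the one-step descent estimate $\Exp{f(x^{t+1})\mid x^t} \le f(x^t) - \tfrac{\gamma}{2}\norm{\nabla f(x^t)}^2$.

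Then I would take total expectation, rearrange to $\tfrac{\gamma}{2}\,\Exp{\norm{\nabla f(x^t)}^2} \le \Exp{f(x^t)} - \Exp{f(x^{t+1})}$, and sum over $t = 0,\dots,T$ so that the right-hand side telescopes; using $f(x^{T+1}) \ge f^*$ gives
$$
\frac{\gamma}{2}\sum_{t=0}^{T}\Exp{\norm{\nabla f(x^t)}^2} \le f(x^0) - f^* = \delta^0 .
$$
Finally, since $\hat x^T$ is chosen to minimize $\norm{\nabla f(x^t)}^2$ over $t = 0,\dots,T$, we have $\Exp{\norm{\nabla f(\hat x^T)}^2} \le \tfrac{1}{T+1}\sum_{t=0}^{T}\Exp{\norm{\nabla f(x^t)}^2} \le \tfrac{2\delta^0}{\gamma(T+1)}$. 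Substituting $\gamma = 1/L_\bP$ and the stated requirement $T+1 \ge 2L_\bP\delta^0/\varepsilon$ immediately gives the $\varepsilon$-accuracy claim.

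There is no genuine obstacle here; the single point that requires care — and the reason this theorem is stated and proved directly rather than obtained from a generic SGD template — is that the quadratic noise term must be bounded by a multiple of $\norm{\nabla f(x^t)}^2$ rather than of $f(x^t) - f^*$ (the latter being what \Cref{lem:secondMoment} delivers). The constant $L_\bP$ is the sharp multiplier for this inequality thanks to the bound $\lambdamax(\Exp{\bC\bL\bC}) \le L_\bP$ in \Cref{lem:key_quantity}, and this is exactly what keeps the stepsize condition at $\gamma \le 1/L_\bP$ and the resulting rate tight.
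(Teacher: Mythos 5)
Your proposal is correct and follows essentially the same route as the paper's proof: the descent lemma under $\bL$-smoothness, bounding the quadratic term via $\lambdamax(\Exp{\bC\bL\bC}) \leq L_\bP$ from \Cref{lem:key_quantity}, using $\gamma \leq 1/L_\bP$ to retain a $\gamma/2$ descent factor, telescoping, and passing to the minimizing iterate. You also correctly identify the same subtlety the paper remarks on before its proof, namely that the sharp constant here is $L_\bM = \lambdamax(\Exp{\bC\bL\bC})$, which is merely upper-bounded by $L_\bP$ for uniformity of presentation.
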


\section{Randomized Progressive Training (RPT)}
\label{sec:RPTintro}
\subsection{Problem formulation}
\label{subsec:formulation}
For the sake of consistency with the PT framework, we shall decompose the domain of the objective function $f$ into $B$ disjoint subblocks. Thus, we will view the domain of $f$ as $\RR^{d_1}\times\dots\times\RR^{d_B}$, where $\sum_{i = 1}^B d_i = d$. We will further assume that $f$ is $\mathbf{L}$-smooth, where $\bL$ is a symmetric, positive definite matrix of size $d\times d$. We let $\bL_i$ be the $i^\text{th}$ diagonal subblock of $\bL$ 
of size $d_i\times d_i$. We put $L_i\eqdef\lambdamax(\bL_i)$. We will be interested in minimizing this function under assumptions \ref{as:stronglyConvex}, \ref{as:convex} or \ref{as:nonconvex}, i.e. finding
$$
\argmin_{x\in\RR^{d_1}\times\dots\times\RR^{d_B}}f(x).
$$
\subsection{RPT}
We now introduce the \emph{PT-sketch operator}, the key building block of RPT.
\begin{definition}\label{def:PTsketch}
	Let $1 =  p_1\geq  p_2\geq\dots\geq p_B>0$. We additionally put $ p_{B+1} = 0$. Let $\sigma = \left(\sigma_1, \dots, \sigma_B\right)$ be any permutation of set $[B]$. For each $i\in[B]$ define the matrix $\bC_i\in\RR^{d\times d}$ as follows:
	\begin{equation*}
		\bC_i = \Diag\left((\xi^i_{\sigma_1})_{d_1}, \dots, (\xi^i_{\sigma_B})_{d_B}\right),
	\end{equation*}
	where $\Diag(v)$ denotes the diagonal matrix with $v$ as its diagonal, $(x)_m\eqdef (x, \dots, x)\in\RR^m$, and 
	$$
		\xi^i_{j} =
		\begin{cases}
			1/ p_j &\text{ if } j\leq i\\
			0 &\text{ otherwise.}
		\end{cases}
	$$
    We finally define a random matrix $\bC$ which is equal to $\bC_i$ with probability $ p_i -  p_{i+1}$ and call it a \emph{PT-sketch}.
\end{definition}
With this definition, we are now ready to formulate the RPT algorithm.
\begin{algorithm}[H]
   \begin{algorithmic}[1]
   \STATE {\bfseries Input:} initial iterates $x^0\in\RR^{d_1}\times\dots\times\RR^{d_B}$; stepsize $\gamma$; integer $T\geq 0$
   \FOR{$t = 1, \dots, T$}
      \STATE Draw the PT-sketch $\bC^t$
      \STATE $x^{t+1} = x^t - \gamma\bC^t\nabla f(x^t)$
   \ENDFOR
   \STATE {\bfseries Output:} $x^T$
   \caption{Randomized Progressive Training}
   	\label{alg:rpt}
   \end{algorithmic}
\end{algorithm}
RPT is an instance of SkGD with probability matrix $\bP = \Diag\left(\left( p_{\sigma_1}\right)_{d_1}, \dots, \left( p_{\sigma_B}\right)_{d_B}\right)$. Hence, theorems \ref{thm:rcdStronglyConvex}, \ref{thm:rcdConvex}, \ref{thm:rcdNonconvex} immediately apply to it.
Our task is now to choose $\bP$ in a way that minimizes the total computation complexity.

\begin{remark}
	There are two main differences between RPT and classical approaches to progressive training \cite{progrssiveKarras, wangProgFed}. Firstly, our method chooses the sizes of coordinate updates in a randomized fashion.
	PT, however, is a deterministic method which grows the coordinate updates according to a pre-set schedule.

	Moreover, PT assumes a particular order of importance of the coordinate blocks -- it \emph{always} updates the first coordinate block, then includes the second block, and so on. However, we note that in general there is no particular reason 
	to prioritize the first coordinate block, and RPT allows us to decide on the order of importance by choosing the permutation $\sigma$ in \Cref{def:PTsketch}.
\end{remark}
\begin{remark}
	The formulation of RPT is stemming from the problem formulation described in \Cref{subsec:formulation}, which assumes a particular division of the domain elements into $B$ coordinate blocks. While our analysis holds for any such division, we do not
	specify how to decide on it, noting that it is often problem-specific (for instance, a coordinate block might be a layer of a neural network \citep{progrssiveKarras,wangProgFed}).
\end{remark}
\section{Analysis of RPT}
\label{sec:AnalysisOfRPT}
%Since RPT is an instance of randomized coordinate descent, theorems \ref{thm:rcdStronglyConvex}, \ref{thm:rcdConvex}, \ref{thm:rcdNonconvex} immediately apply to it. Estimating $L_\bP$ in this setting
%is therefore enough to describe the convergence of RPT. Below we show that for carefully chosen probabilities $ p_i$, given a sufficient degree of separability of the objective function, one can 
%obtain \emph{arbitrary improvement} over vanilla gradient descent in terms of the total complexity (whi
%`ch we will precisely quantify below).

\subsection{Cost model}
In this section we will define the computation cost of optimizing the objective function from \Cref{subsec:formulation}. By construction, in the RPT procedure the variable $x^t\in\RR^{d_1}\times\dots\times\RR^{d_B}$ is divided into $B$ coordinate blocks and updated with respect to a certain subset of these blocks. 
In our discussion we will assume that by carrying out an update with respect to block $i$ the algorithm suffers a penalty $c_i$. Hence, if $\bC^t = \bC_i$, then $\cost_t$, the cost incurred in the iteration $t$ is equal to $\sum_{j = 1}^ic_j$. Moreover, without loss of generality we make a simplifying normalizing assumption on the costs $c_i$ for the sake of presentation.
\begin{assumption}
	\label{as:normalized}
	$\sum_{i = 1}^Bc_i = 1$.
\end{assumption}

Now, recall that $\bC = \bC_i$ with probability $ p_i -  p_{i + 1}$. Therefore, the expected cost of the entire optimization procedure is 
\begin{align*}
	\Exp{\text{cost}(T)} &\eqdef \Exp{\sum_{t = 1}^T\cost_t}\\
	&= T\sum_{i = 1}^B( p_i -  p_{i+1})\sum_{j = 1}^ic_{\sigma_j} \\
	&= T\sum_{i = 1}^B p_ic_{\sigma_i}.
\end{align*} 

Previous convergence results show that in order to reach the $\varepsilon$-accurate solution in all regimes 
described by assumptions \ref{as:stronglyConvex}, \ref{as:convex}, and \ref{as:nonconvex}, one needs $T$ to be proportional to $L_\bP$. Thus,
$$
	\Exp{\text{cost}(T)}\propto c(\bP) \eqdef L_\bP\sum_{i = 1}^B p_ic_{\sigma_i}
$$
regardless of the regime in question. Thus, choosing the optimal RPT sketch is equivalent to solving the minimization problem
\begin{equation}
	\label{eq:complexity}
\min_{\sigma}\min_{1= p_1\geq\dots\geq p_B>0} L_\bP\sum_{i = 1}^B p_ic_{\sigma_i}.
\end{equation}

\subsection{Choosing the probabilities $ p_i$}
Finding the global minimum of the quantity $c(\bP)$ in general is a challenging task when $\bL$ is simply any positive definite matrix; $c(\bP)$ is a non-convex function that can intricately depend on the structure of matrix $\bL$.
However, note that taking $ p_i = 1$ for all $i\in[B]$ recovers gradient descent, i.e. the optimal $c(\bP)$ is at most $L$. Thus, the solution of Problem~(\ref{eq:complexity}) is not worse than gradient descent. Furthermore, one can find a closed-form minimizer for the \emph{tight upper bound} for 
$L_\bP$. We first invoke Lemma 1 from \citet{nesterovRCD}.
\begin{restatable}{lemma}{twoBounds}
	\label{lem:twoBounds}
	Consider an arbitrary positive semidefinite matrix $\bM\in\RR^{d\times d}$ with diagonal subblocks $\bM_1, \dots. \bM_B$, where each $\bM_i\in\RR^{d_i\times d_i}$. Put $M_i = \lambda_{\max}(\bM_i)$ and $M = \lambdamax(\bM)$. Then
	$$
	\max_i M_i \leq M \leq \sum_{i = 1}^B M_i.
	$$
\end{restatable}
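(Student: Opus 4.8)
The plan is to establish the two inequalities separately, in both cases by working directly with the Rayleigh quotient $x^\top \bM x / \norm{x}^2$. Throughout, write $x = (x_1, \dots, x_B) \in \RR^{d_1}\times\dots\times\RR^{d_B}$ for a generic vector split into blocks, and let $\bM_{ij}\in\RR^{d_i\times d_j}$ denote the $(i,j)$-th block of $\bM$, so that $\bM_i = \bM_{ii}$ and $\norm{x}^2 = \sum_{i=1}^B \norm{x_i}^2$.

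For the lower bound $\max_i M_i \le M$, I would pick the index $i$ attaining $\max_i M_i$, take a unit top eigenvector $u\in\RR^{d_i}$ of $\bM_i$ (so $u^\top\bM_i u = M_i$), and zero-pad it to a vector $x\in\RR^d$ supported on the $i$-th block. Then $x^\top \bM x = u^\top \bM_i u = M_i$ while $\norm{x}=1$, so $M = \lambdamax(\bM) \ge x^\top\bM x = M_i$. This step is immediate and uses nothing beyond the variational characterization of $\lambdamax$.

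For the upper bound $M\le\sum_{i=1}^B M_i$, the key step is a block Cauchy--Schwarz inequality: for every $i,j$ and every $x$,
$$\bigl|x_i^\top\bM_{ij}x_j\bigr| \le \sqrt{x_i^\top\bM_i x_i}\,\sqrt{x_j^\top\bM_j x_j}.$$
This I would get from the fact that the principal submatrix of $\bM$ formed by the row/column blocks $i$ and $j$ (a $2\times2$ block matrix with diagonal blocks $\bM_i,\bM_j$ and off-diagonal blocks $\bM_{ij},\bM_{ji}$) is positive semidefinite; applied to the vector whose first block is $tx_i$ and second block is $x_j$, this says $\alpha t^2 + 2\gamma t + \beta \ge 0$ for all $t\in\RR$, where $\alpha = x_i^\top\bM_i x_i \ge 0$, $\beta = x_j^\top\bM_j x_j \ge 0$, $\gamma = x_i^\top\bM_{ij}x_j$, and a discriminant argument (with the degenerate case $\alpha=0$ checked directly) gives $\gamma^2\le\alpha\beta$. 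Setting $t_i \eqdef \sqrt{x_i^\top\bM_i x_i}$ and summing over all $i,j$ yields $x^\top\bM x = \sum_{i,j}x_i^\top\bM_{ij}x_j \le \bigl(\sum_{i=1}^B t_i\bigr)^2$. Finally, since $t_i^2 = x_i^\top\bM_i x_i \le M_i\norm{x_i}^2$, one more application of the ordinary Cauchy--Schwarz inequality gives $\bigl(\sum_i t_i\bigr)^2 \le \bigl(\sum_i \sqrt{M_i}\,\norm{x_i}\bigr)^2 \le \bigl(\sum_i M_i\bigr)\sum_i\norm{x_i}^2 = \bigl(\sum_i M_i\bigr)\norm{x}^2$. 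As $x$ is arbitrary, $M \le \sum_{i=1}^B M_i$.

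The only place requiring any care is the block Cauchy--Schwarz step, and even there the $2\times2$ PSD principal submatrix reduces everything to the elementary scalar fact about nonnegative quadratics; the rest is bookkeeping with block matrix multiplication and two invocations of the scalar Cauchy--Schwarz inequality. I do not anticipate a genuine obstacle — this is essentially Lemma~1 of \citet{nesterovRCD}, recast in our notation.
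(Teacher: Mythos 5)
Your proposal is correct. For the lower bound $\max_i M_i \le M$ you do exactly what the paper does: restrict the Rayleigh quotient to the subspace spanned by the coordinates of the maximizing block (equivalently, zero-pad a top eigenvector of $\bM_i$), which immediately gives $M \ge M_i$. The difference is in the upper bound: the paper does not prove it at all, deferring entirely to Lemma~1 of \citet{nesterovRCD}, whereas you supply a complete self-contained argument — the $2\times 2$ PSD principal-submatrix trick yielding the block Cauchy--Schwarz inequality $|x_i^\top \bM_{ij} x_j| \le \sqrt{x_i^\top\bM_i x_i}\,\sqrt{x_j^\top\bM_j x_j}$ (including the degenerate case $\alpha=0$), followed by $x^\top\bM x \le \bigl(\sum_i t_i\bigr)^2 \le \bigl(\sum_i M_i\bigr)\norm{x}^2$ via the scalar Cauchy--Schwarz inequality. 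Every step checks out (note that $\bM_{ji}=\bM_{ij}^\top$ by symmetry, which is what makes the cross terms pair up into $2t\,x_i^\top\bM_{ij}x_j$), and this is indeed essentially the standard proof of the cited lemma, so what your version buys is self-containedness rather than a new idea.
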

In \Cref{sec:tightness} we argue that these bounds are tight.

We then have
$$
c(\bP) \leq \sum_{i = 1}^B\frac{L_{\sigma_i}}{p_i}\cdot\sum_{i = 1}^B p_ic_{\sigma_i}.
$$
The below lemma describes the global minimizer of this upper bound.
\begin{restatable}{lemma}{upperBound}\label{lem:upperBound}
	$$
	\sum_{i = 1}^B\frac{L_{\sigma_i}}{p_i}\cdot\sum_{i = 1}^B p_ic_{\sigma_i}\geq \left(\sum_{i = 1}^B\sqrt{L_ic_i}\right)^2.
	$$
	Moreover, this bound is met by taking $\sigma$ to be such that
	$$
		\sqrt{L_{\sigma_1} / c_{{\sigma_1}}} \geq \dots \geq \sqrt{L_{\sigma_B} / c_{{\sigma_B}}},
	$$
	and
$$
 p_i = \frac{\sqrt{L_{\sigma_i}/c_{\sigma_i}}}{\max_j\sqrt{L_j / c_j}}.
$$

\end{restatable}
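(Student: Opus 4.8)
The plan is to prove the inequality by a direct application of the Cauchy--Schwarz inequality, and then to verify that the proposed choice of $\sigma$ and $p_i$ turns the inequality into an equality.

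First I would rewrite the left-hand side. Since $\sigma$ is a permutation of $[B]$, the sum $\sum_{i=1}^B L_{\sigma_i}/p_i$ ranges over the same index set as $\sum_{i=1}^B p_i c_{\sigma_i}$, so I want to pair up the terms by the shared index $i$. Apply Cauchy--Schwarz to the vectors with entries $\sqrt{L_{\sigma_i}/p_i}$ and $\sqrt{p_i c_{\sigma_i}}$:
$$
\left(\sum_{i=1}^B \sqrt{L_{\sigma_i}/p_i}\cdot\sqrt{p_i c_{\sigma_i}}\right)^2 \leq \left(\sum_{i=1}^B \frac{L_{\sigma_i}}{p_i}\right)\left(\sum_{i=1}^B p_i c_{\sigma_i}\right).
$$
The left-hand side simplifies: the $p_i$ cancels inside each term, leaving $\left(\sum_{i=1}^B \sqrt{L_{\sigma_i} c_{\sigma_i}}\right)^2$, and since $\sigma$ is a permutation this equals $\left(\sum_{i=1}^B \sqrt{L_i c_i}\right)^2$, independent of $\sigma$. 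This already gives the claimed lower bound for every permutation $\sigma$ and every feasible $(p_i)$.

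Next I would check tightness. Equality in Cauchy--Schwarz holds iff the two vectors are proportional, i.e. $L_{\sigma_i}/p_i = \lambda\, p_i c_{\sigma_i}$ for some constant $\lambda>0$ and all $i$, equivalently $p_i^2 = L_{\sigma_i}/(\lambda c_{\sigma_i})$, so $p_i \propto \sqrt{L_{\sigma_i}/c_{\sigma_i}}$. Normalizing so that the largest $p_i$ equals $1$ (which is required by the constraint $p_1=1$ together with $p_1\geq\cdots\geq p_B$) forces $p_i = \sqrt{L_{\sigma_i}/c_{\sigma_i}}\,/\,\max_j\sqrt{L_j/c_j}$, exactly as stated. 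Then I would substitute this choice back in to confirm the value is indeed $\left(\sum_i\sqrt{L_ic_i}\right)^2$; this is a short computation since $\sum_i L_{\sigma_i}/p_i = \max_j\sqrt{L_j/c_j}\cdot\sum_i\sqrt{L_{\sigma_i}c_{\sigma_i}}$ and $\sum_i p_i c_{\sigma_i} = (1/\max_j\sqrt{L_j/c_j})\cdot\sum_i\sqrt{L_{\sigma_i}c_{\sigma_i}}$, and the product of these two telescopes the extremum factor away.

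The only genuine subtlety — and the step I would be most careful about — is the role of the permutation $\sigma$ and the monotonicity constraint $1=p_1\geq p_2\geq\cdots\geq p_B>0$. The Cauchy--Schwarz bound itself is permutation-independent, so the bound holds regardless; but to \emph{attain} it we need the equality-case probabilities $p_i\propto\sqrt{L_{\sigma_i}/c_{\sigma_i}}$ to actually be non-increasing in $i$ and to have $p_1=1$. This is precisely why one must choose $\sigma$ so that $\sqrt{L_{\sigma_1}/c_{\sigma_1}}\geq\cdots\geq\sqrt{L_{\sigma_B}/c_{\sigma_B}}$: under that ordering $p_i$ is non-increasing and $p_1 = \sqrt{L_{\sigma_1}/c_{\sigma_1}}/\max_j\sqrt{L_j/c_j} = 1$ because $\sigma_1$ realizes the maximum. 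So I would close by noting that this particular $\sigma$ makes the equality-case $(p_i)$ feasible, hence the lower bound is met, which proves the ``moreover'' part.
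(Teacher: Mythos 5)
Your proposal is correct and follows essentially the same route as the paper's proof: a single application of Cauchy--Schwarz to the vectors $\bigl(\sqrt{L_{\sigma_i}/p_i}\bigr)_i$ and $\bigl(\sqrt{p_i c_{\sigma_i}}\bigr)_i$, followed by checking that the stated $\sigma$ and $p_i$ attain equality. Your treatment is in fact slightly more careful than the paper's, since you spell out the Cauchy--Schwarz equality condition and explain why the sorting of $\sigma$ is needed to make the equality-case probabilities satisfy the constraint $1 = p_1 \geq \dots \geq p_B > 0$, a point the paper leaves implicit.
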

This choice of probabilities is similar to the one proposed by \cite{Zhu2016EvenFA} (which, contrary to our paper, treats \emph{accelerated version} of RCD) where the probabilities $ p_i$ are proportional to $\sqrt{L_i}$. However, \cite{Zhu2016EvenFA} consider the setup of non-overlapping blocks and analyse iteration complexity only. 
On the contrary, our work, to the best of our knowledge, is the first one to consider such choice of probabilities in a non-accelerated setting and motivates this choice through the total complexity analysis.

We summarize the discussion with the following corollary:
\begin{corollary}
	Let $\bP$ be the optimal solution of \Cref{eq:complexity}. Then
	$$
		c(\bP) \leq \min\left\{L, \left(\sum_{i = 1}^B\sqrt{L_ic_i}\right)^2\right\}.
	$$
\end{corollary}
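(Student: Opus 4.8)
The plan is to exploit the two elementary facts established just before the statement: (i) taking $p_i = 1$ for all $i$ makes $\bP = \mathbf{I}$, so $L_\bP = \lambdamax(\bL) = L$ and $\sum_i p_i c_{\sigma_i} = \sum_i c_{\sigma_i} = 1$ by \Cref{as:normalized}, hence $c(\bP) \leq L$; and (ii) for an arbitrary $\bP$ generated by the PT-sketch, the bound $L_\bP \leq \sum_{i=1}^B L_{\sigma_i}/p_i$ holds, which follows from \Cref{lem:twoBounds} applied to $\bM = \bP^{-1/2}\bL\bP^{-1/2}$ (whose $i$-th diagonal subblock is $p_{\sigma_i}^{-1}\bL_i$ with largest eigenvalue $L_{\sigma_i}/p_i$), so that $c(\bP) \leq \bigl(\sum_i L_{\sigma_i}/p_i\bigr)\bigl(\sum_i p_i c_{\sigma_i}\bigr)$. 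Since $\bP$ is defined to be a \emph{minimizer} of \Cref{eq:complexity}, its value $c(\bP)$ is no larger than the value of $c(\cdot)$ at \emph{any} other feasible choice of $(\sigma, (p_i))$.

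First I would record that the minimum value of \Cref{eq:complexity} is at most $L$, by evaluating the objective at the feasible point $p_1 = \dots = p_B = 1$ as in (i). Next I would bound the same minimum by $\bigl(\sum_{i=1}^B \sqrt{L_i c_i}\bigr)^2$: by \Cref{lem:upperBound}, there is a feasible choice of permutation $\sigma$ (ordering the blocks by decreasing $\sqrt{L_{\sigma_i}/c_{\sigma_i}}$) and feasible probabilities $p_i = \sqrt{L_{\sigma_i}/c_{\sigma_i}} / \max_j \sqrt{L_j/c_j}$ — feasible because this $p_i$ is decreasing in $i$, lies in $(0,1]$, and $p_1 = 1$ — at which $\bigl(\sum_i L_{\sigma_i}/p_i\bigr)\bigl(\sum_i p_i c_{\sigma_i}\bigr) = \bigl(\sum_i \sqrt{L_i c_i}\bigr)^2$, and combining with (ii) gives $c(\bP')\leq \bigl(\sum_i\sqrt{L_ic_i}\bigr)^2$ for that particular sketch $\bP'$. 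Since $\bP$ minimizes \Cref{eq:complexity}, $c(\bP) \leq c(\bP')$, so $c(\bP)$ is at most this quantity as well. Taking the minimum of the two upper bounds yields the claim.

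There is essentially no hard step here — the corollary is a bookkeeping consequence of \Cref{lem:twoBounds} and \Cref{lem:upperBound} together with the two explicit feasible points. The only point requiring a line of care is verifying that the probabilities prescribed in \Cref{lem:upperBound} are genuinely feasible for \Cref{eq:complexity}, i.e.\ that $1 = p_1 \geq p_2 \geq \dots \geq p_B > 0$; this is immediate from the chosen ordering of $\sigma$ and the normalization by $\max_j \sqrt{L_j/c_j}$, which is exactly $\sqrt{L_{\sigma_1}/c_{\sigma_1}}$. I would also remark, for completeness, that the displayed upper bound on $L_\bP$ used in (ii) is precisely the instantiation of \Cref{lem:twoBounds} invoked in the paragraph preceding \Cref{lem:upperBound}, so no new ingredient is needed.
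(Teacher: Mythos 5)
Your proposal is correct and follows exactly the route the paper intends: the corollary is stated as a summary of the preceding discussion, namely that the feasible point $p_i\equiv 1$ gives the bound $L$ (using \Cref{as:normalized}), while \Cref{lem:twoBounds} applied to $\bP^{-1/2}\bL\bP^{-1/2}$ together with the optimal choice from \Cref{lem:upperBound} gives the bound $\left(\sum_{i=1}^B\sqrt{L_ic_i}\right)^2$, and optimality of $\bP$ yields the minimum of the two. Your added check that the probabilities from \Cref{lem:upperBound} satisfy $1=p_1\geq\dots\geq p_B>0$ is a worthwhile detail the paper leaves implicit.
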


\section{Experiments}
\label{sec:Experiments}

We conduct several experiments to evaluate the performance of Randomized Progressive Training in comparison to other methods on various target functions and datasets.

\subsection{Quadratic functions on synthetic data}

In the first experiment, we show how fast Randomized Progressive Training can be compared to vanilla Gradient Descent. When running RPT with probabilities proposed in~\Cref{lem:upperBound}, theory suggests that total computation cost of $O((\sum\limits_{i=1}^B \sqrt{L_i c_i})^2 / \varepsilon)$ is required to achieve $\varepsilon$-accuracy. At the same time, standard theory for GD under~\Cref{as:normalized} (meaning that a cost of one GD iteration is one) suggests $O(L / \varepsilon)$ total cumulative cost. We vary block smoothness constants $L_i$ and computation costs $c_i$ so that the ratio $ (\sum\limits_{i=1}^B \sqrt{L_i c_i})^2  / L$ is relatively large. In this vein, we generate nine quadratic functions $f(x) = \frac12 x^\top \bA x$ with varying dimensions and properties of the \textit{diagonal} matrix $\bA$. The number of blocks is three in all experiments. We consider the computation cost $c_i$ of block $i$ to be proportional to its size. As shown on~\Cref{fig:quad_functions}, when the size of the block with the smallest smoothness constant is much larger than one of the largest $L_i$, RPT shows tremendous speedup over GD. \Cref{table:quad_functions} compares speedups predicted by theory with actual ones. 

\begin{figure*}[h]
\includegraphics[width=\linewidth]{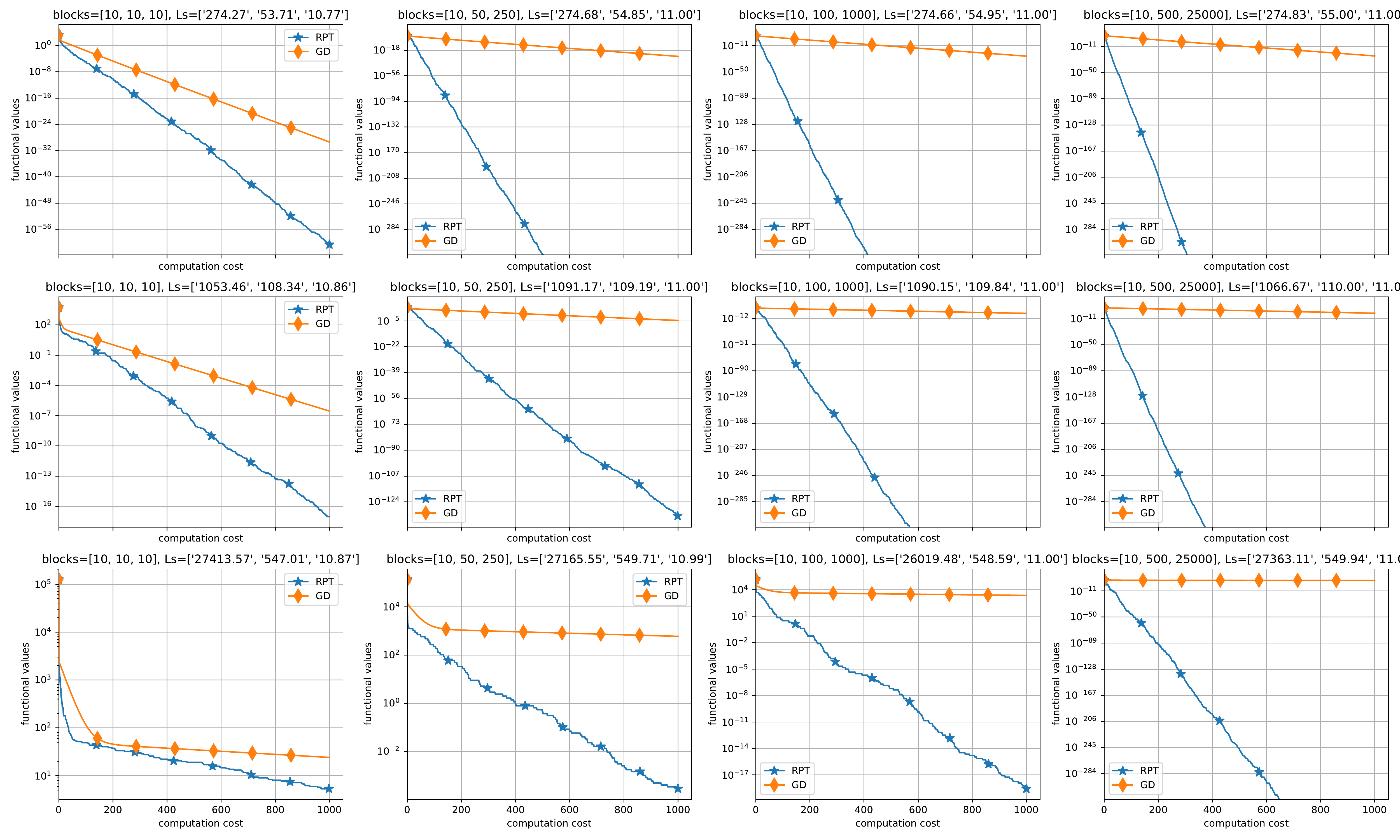}
\caption{Comparison of Randomized Progressive Training (RPT)  Gradient Descent (GD) on quadratic functions. Target function is $f(x)~=~\frac12 x^\top \bA x $, where the \textit{diagonal} matrix $\bA \in \RR^{l\times l}$ and its dimension $l$ differ in each plot. For RPT, the iterate is separated into three blocks. 'blocks' in the plot title indicates the block sizes, 'Ls' indicates smoothness constants within the corresponding block. Numbers in 'blocks' sum up to $l$. The block computation cost $c_i$ is the size of block $i$ divided by $l$, and the computation cost of a single GD iteration is one. Probabilities for RPT are set as stated in~\Cref{lem:upperBound}. }
\label{fig:quad_functions}
\end{figure*}

\begin{table}
	\centering
	\begin{tabular}{|c|c|c|c|c|c|}
		\hline
		\multicolumn{4}{|c|}{Setup} & Theory & Real\\
		\hline
		blocks & 10 & 10 & 10 & \multirow{2}{*}{1.1} & \multirow{2}{*}{1.9}\\
		\cline{1-4}
		Ls & 272 & 53.3 & 11 & & \\
		\hline
		blocks & 10 & 50 & 250 & \multirow{2}{*}{3.4} & \multirow{2}{*}{19.8}\\
		\cline{1-4}
		Ls & 270.5 &  55 & 11 & & \\
		\hline		
		blocks & 10 & 100 &1000 & \multirow{2}{*}{5.7} & \multirow{2}{*}{26.9}\\
		\cline{1-4}
		Ls & 256.7 & 54.8 & 11 & & \\
		\hline
		blocks & 10 & 500 & 25000  & \multirow{2}{*}{12.7} & \multirow{2}{*}{36.6}\\
		\cline{1-4}
		Ls & 274.8 & 55 & 11 & & \\		
		\hline
		blocks & 10 & 10 & 10 & \multirow{2}{*}{1.5} & \multirow{2}{*}{2.2}\\
		\cline{1-4}
		Ls & 1066.5 & 108.4 & 11  & & \\
		\hline
		blocks & 10 & 50 & 250 & \multirow{2}{*}{6.4} & \multirow{2}{*}{17.6}\\
		\cline{1-4}
		Ls & 1080.4 & 109.6 & 11  & & \\
		\hline		
		blocks & 10 & 100 &1000 & \multirow{2}{*}{12.3} & \multirow{2}{*}{64.9}\\
		\cline{1-4}
		Ls & 1093.9 & 109.7 & 11 & & \\
		\hline
		blocks & 10 & 500 & 25000  & \multirow{2}{*}{36.6} & \multirow{2}{*}{128.1}\\
		\cline{1-4}
		Ls &  1066.7 & 110 & 11  & & \\		
		\hline
		blocks & 10 & 10 & 10 & \multirow{2}{*}{2.2} & \multirow{2}{*}{2.8}\\
		\cline{1-4}
		Ls & 27414 & 547 & 11  & & \\
		\hline
		blocks & 10 & 50 & 250 & \multirow{2}{*}{15.4} & \multirow{2}{*}{20}\\
		\cline{1-4}
		Ls & 27166 & 550 & 11  & & \\
		\hline		
		blocks & 10 & 100 &1000 & \multirow{2}{*}{40.1} & \multirow{2}{*}{62.4}\\
		\cline{1-4}
		Ls & 26020 & 549 & 11 & & \\
		\hline
		blocks & 10 & 500 & 25000  & \multirow{2}{*}{282.5} & \multirow{2}{*}{407.8}\\
		\cline{1-4}
		Ls &  27363 & 550 & 11  & & \\		
		\hline		
	\end{tabular}
	\caption{Comparison of theoretical and actual speedup of Randomized Progressive Training versus Gradient Descent in a synthetic experiment using quadratic functions.}
	\label{table:quad_functions}
\end{table}

\subsection{Ridge regression on real datasets}

In our second set of experiments, we assess the convergence of Randomized Progressive Training, Gradient Descent, and Cyclic Block Coordinate Descent~\cite{cbcd} on the ridge regression problem
\begin{align*}
\min f(x) = \avein (\langle a_i, x \rangle - b_i)^2 + \lambda \|x\|^2_2,
\end{align*}
where $a_i \in \RR^d$, $b_i \in \RR$ and $\lambda$ is a regularization parameter. 

We consider four regression datasets: California housing~\cite{kelleyspace}, prostate cancer, Los Angeles ozone~\cite{elements}, and white wine quality~\cite{CORTEZ2009547}. Each dataset feature is normalized, and a bias vector is added. Then, heuristically, features are permuted so that the diagonal of the Hessian matrix is sorted in descending order. As in the previous experiment, we consider $c_i$ proportional to the corresponding block size.

As can be seen from~\Cref{fig:ridge_regression}, although the theoretical speedup of RPT over GD, according to~\Cref{table:ridge_regression}, is close to 1, the actual speedup is still noteworthy. However, both algorithms are slower than Cyclic Block Coordinate Descent. 

\begin{figure*}
	\includegraphics[width=\linewidth]{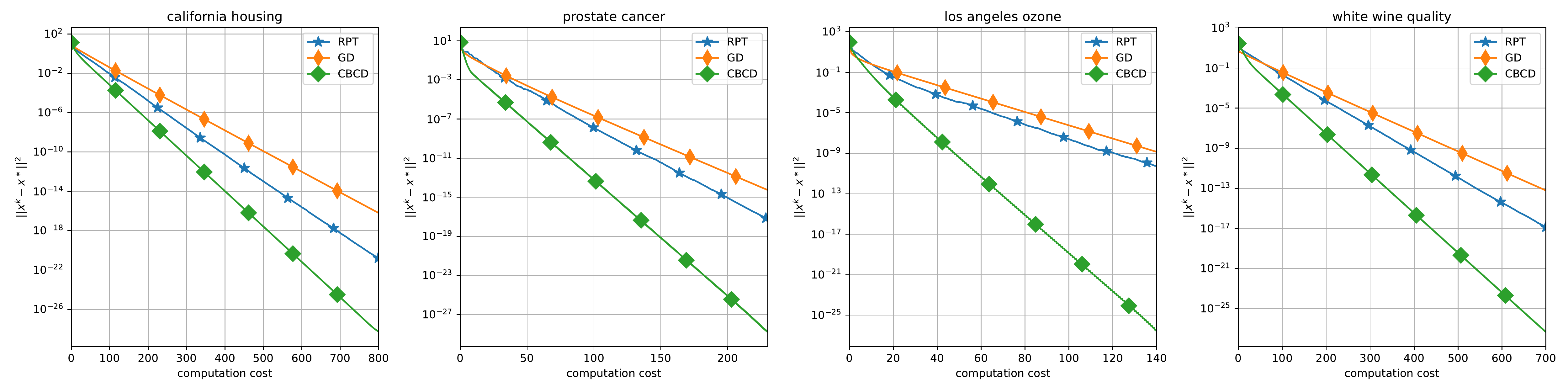}
	\caption{Comparison of Randomized Progressive Training (RPT) vs. Gradient Descent (GD) and Cyclic Block Coordinate Descent (CBCD) on the ridge regression problem. The block computation cost $c_i$ is the size of block $i$ divided by the dimension of the problem, and the computation cost of one GD iteration is 1. Probabilities of block sampling for RPT are set as stated in~\Cref{lem:upperBound}.}
	\label{fig:ridge_regression}
\end{figure*}

\begin{table}
\centering
\begin{tabular}{|c|c|c|}
\hline
\multirow{2}{*}{Dataset} & Theoretical & Actual \\
& speedup & speedup \\
\hline
california housing & 1 & 1.3 \\
\hline
prostate cancer & 1 & 2.1 \\
\hline
los angeles ozone & 1 & 2.4 \\
\hline
white wine quality & 1.1 & 1.3 \\
\hline
\end{tabular}
\caption{Comparison of theoretical and actual speedup of Randomized Progressive Training versus Gradient Descent for ridge regression problems.}
\label{table:ridge_regression}
\end{table}

\subsection{Image classification on CIFAR10}

In our final computational experiment, we trained a Deep Residual Neural Network~\cite{He_2016_CVPR} with 18 layers (ResNet18) for the task of image classification on the CIFAR10 dataset~\cite{Krizhevsky09learningmultiple}. 

{\bfseries Implementation.} The CIFAR10 dataset comprises 50'000 training samples and 10'000 test samples, which are distributed evenly among 10 classes.. The samples are RGB images of size $32\times32$. To facilitate hyperparameter tuning, we further divided the training dataset into training and validation datasets with a ratio $9:1$. During the training process, we applied a padding of 4, randomly cropped to size $32 \times 32$, and randomly flipped images horizontally. We trained the neural network with three algorithms: Stochastic Gradient Descent, Progressive Training~\cite{wangProgFed}, and our own method. We implemented all settings using the Python packages JAX~\cite{jax2018github}, Flax~\cite{flax2020github}, and Optax.

{\bfseries Fine-tuning.} Fine-tuning is a crucial step in the process of training neural networks, as the lack of smoothness and convexity properties inherent in these models often renders the application of theoretical results ineffective~\cite{intrig_prop}. To address this, a common approach for hyperparameter tuning is to utilize a grid search method.  In our case, for the RPT method, we fine-tuned the learning rate and sampling probabilities. A grid search was conducted for the learning rate over the values $[1, 0.5, 0.1, 0.05, 0.01, 0.001]$, and for sampling probabilities, we evaluated two different strategies. The first strategy used exponentially decaying probabilities with rates $[0.999, 0.995, 0.99, 0.95, 0.9, 0.85]$, \textit{e.g.}, for a rate of $0.9$, the probabilities were calculated as $(p_1, \dots, p_{22}) = (1, 0.9, 0.9^2, \dots, 0.99^{21})$. The second strategy used evenly spaced probabilities between $p_1 = 1$ and $p_{22}$, where $p_{22}$ ran over the list $[0.7, 0.6, 0.5, 0.4, 0.3, 0.2]$. The best results were obtained with a learning rate of $1$ and exponentially decaying sampling probabilities with a rate of $0.9$. For Progressive Training (PT), we only fine-tuned the learning rate and utilized the round sizes $T_1, \dots, T_{22}$ suggested in~\cite{wangProgFed} as the best strategy. Specifically, we set $T_1  = \dots = T_{21} = \frac{T}{44}$, and $T_{22} = \frac{23T}{44}$, where $T$ is the total number of steps. The grid search for the learning rate was the same as for RPT. Additionally, Stochastic Gradient Descent parameters such as the learning rate, weight decay, and momentum were set according to the guidelines in~\cite{benchopt}. The batch size in all three settings was 128.
\begin{figure*}
	\begin{subfigure}[c]{0.5\linewidth}
		\centering
		\includegraphics[width=\linewidth]{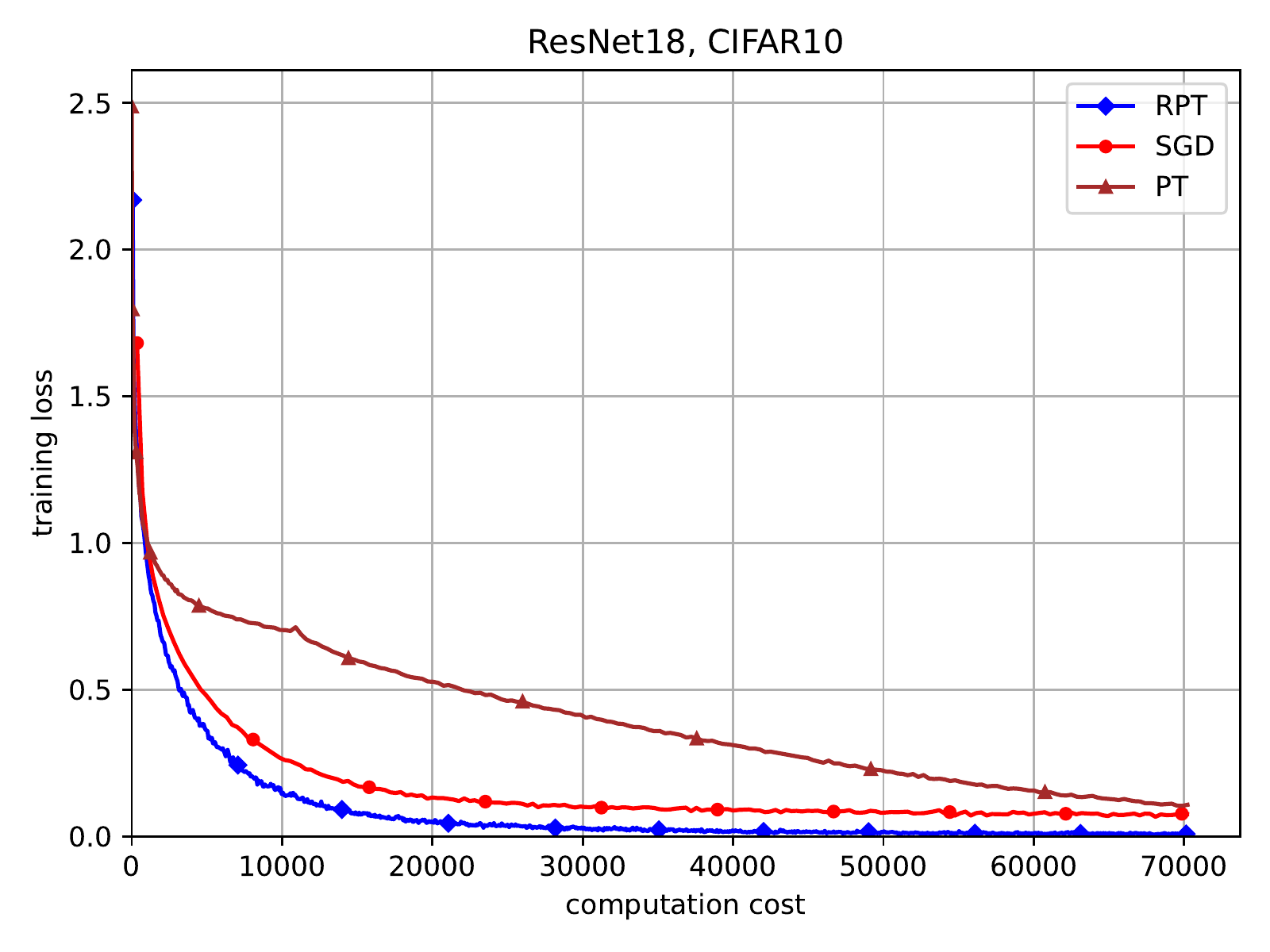}
		\caption{Training loss versus computation cost}
		\label{fig:cifar10_train_cost}
	\end{subfigure}
	\hfill
	\begin{subfigure}[c]{0.5\linewidth}
		\centering		
		\includegraphics[width=\linewidth]{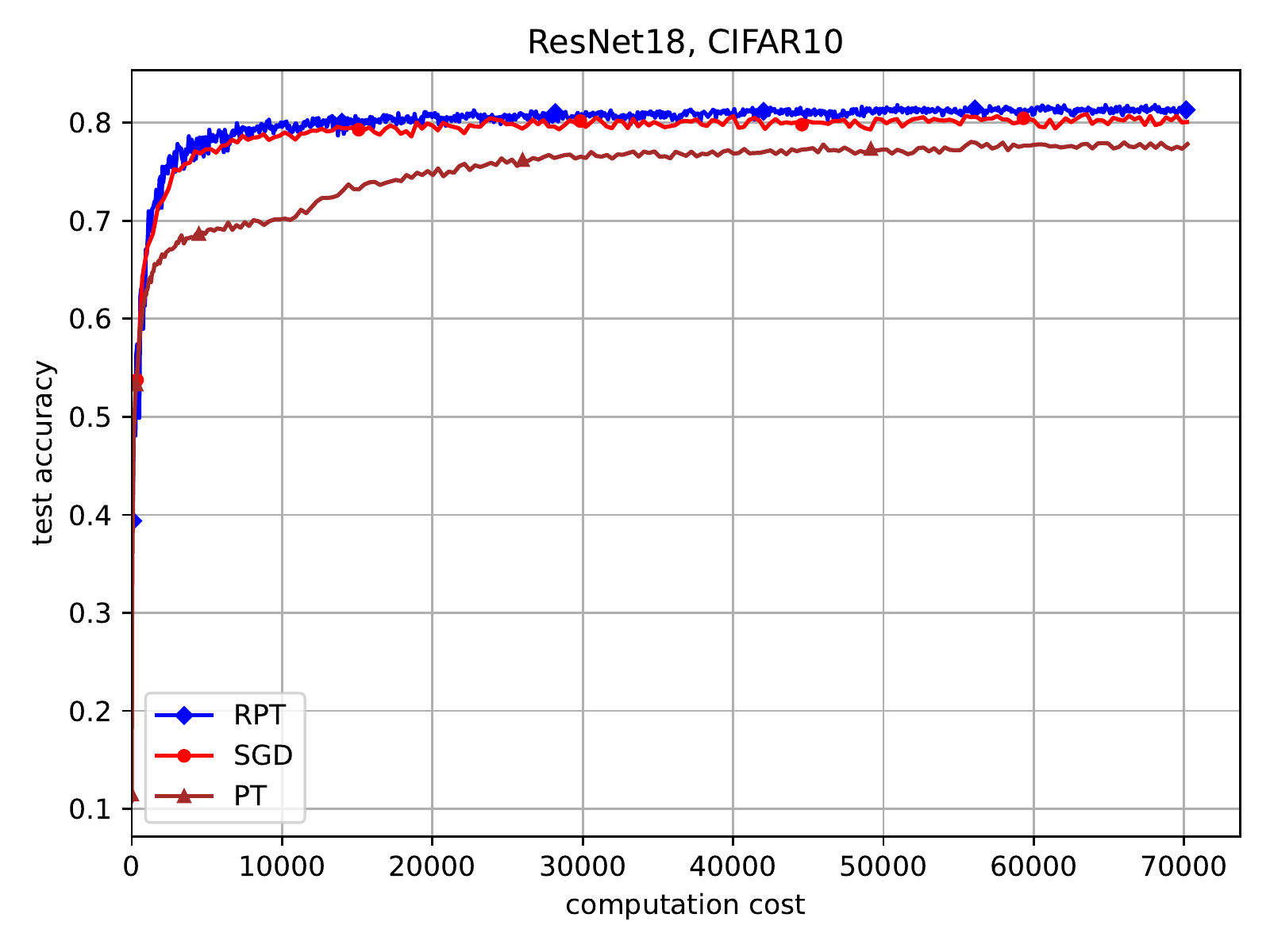}
		\caption{Test accuracy versus computation cost}
		\label{fig:cifar10_test_cost}
	\end{subfigure}
	\\
	\begin{subfigure}[c]{0.5\linewidth}
		\centering		
		\includegraphics[width=\linewidth]{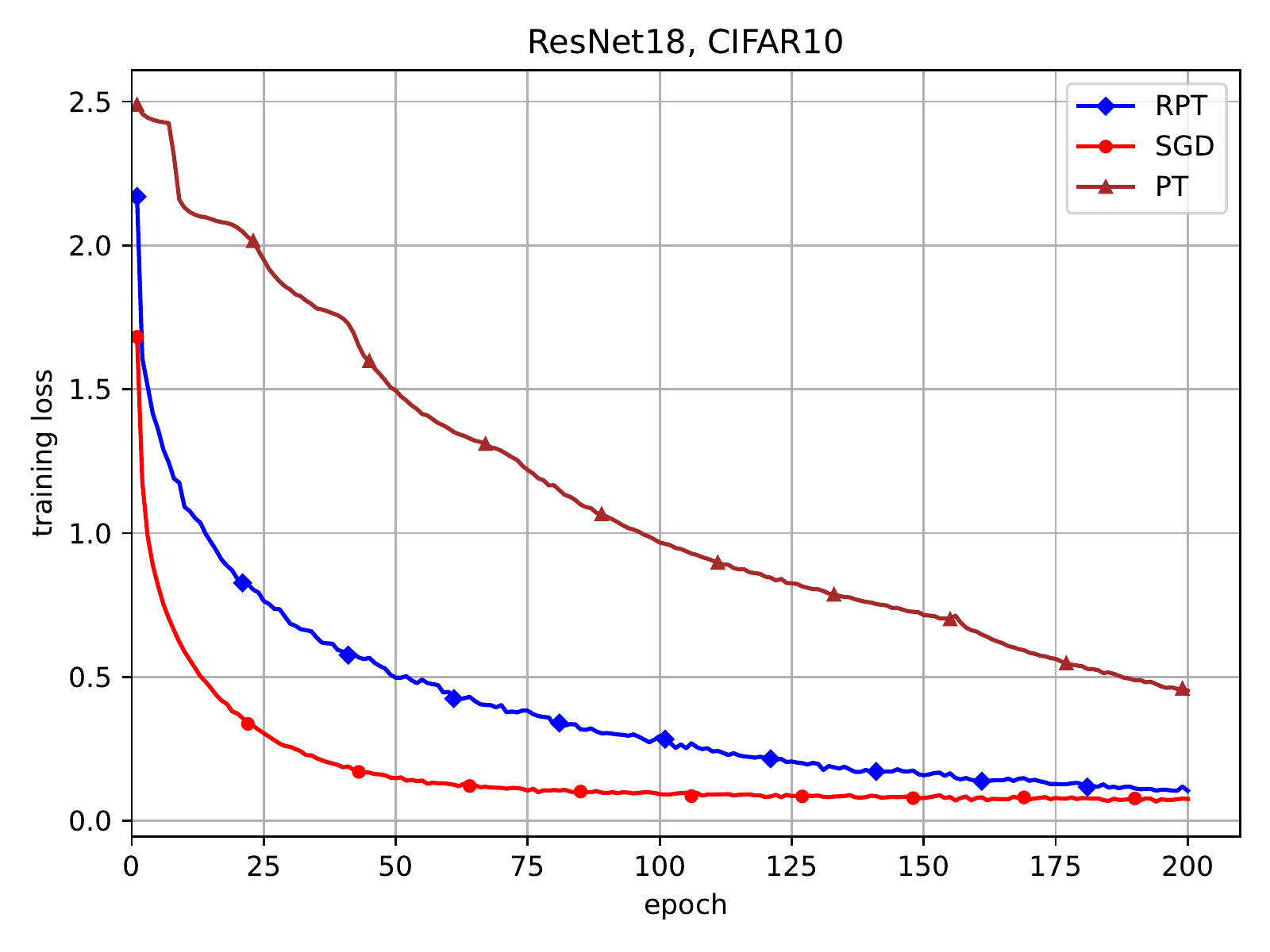}
		\caption{Training loss versus epoch}
		\label{fig:cifar10_train_epoch}
	\end{subfigure}
	\hfill
	\begin{subfigure}[c]{0.5\linewidth}
		\centering		
		\includegraphics[width=\linewidth]{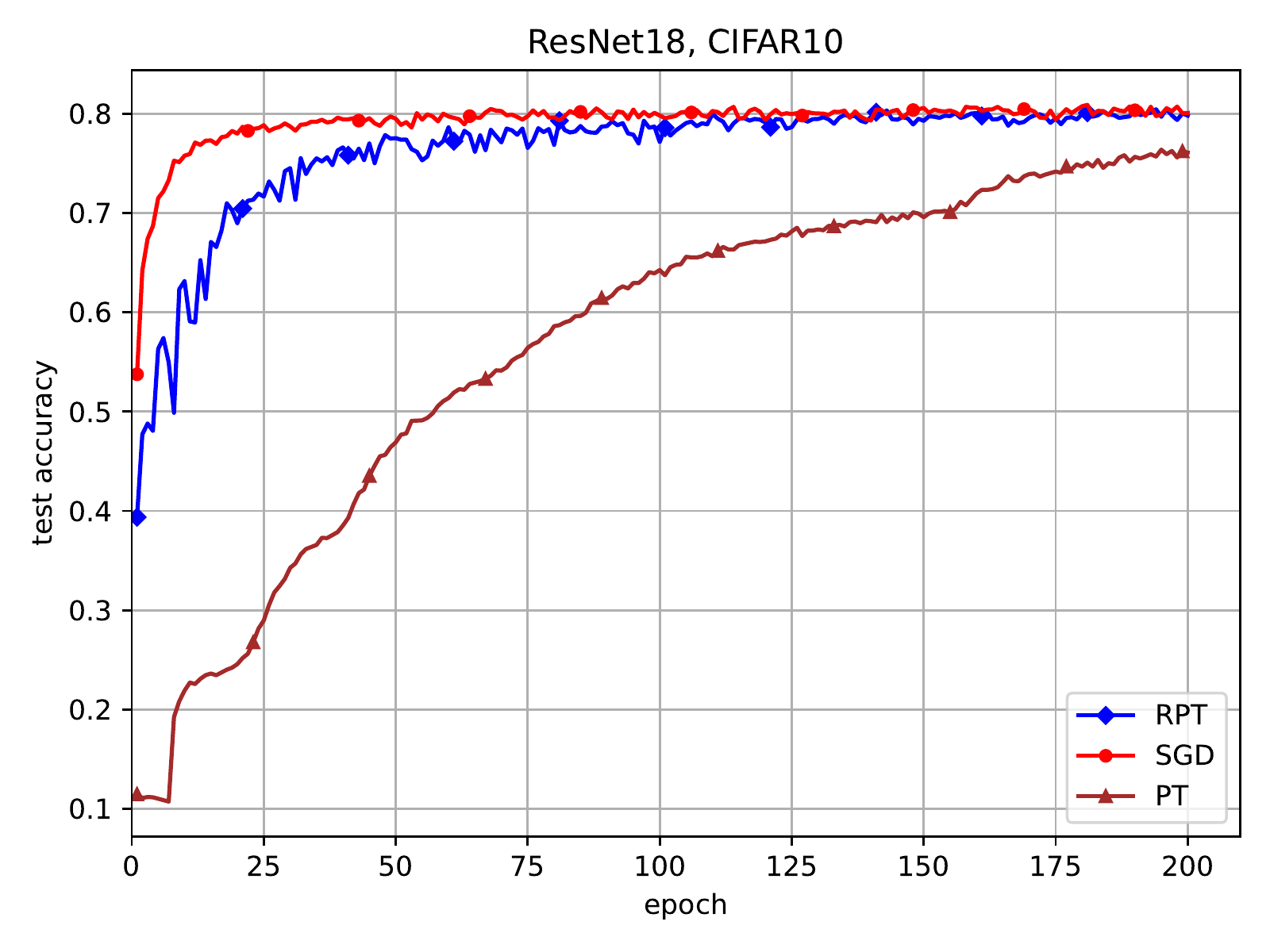}
		\caption{Test accuracy versus epoch}
		\label{fig:cifar10_test_epoch}
	\end{subfigure}%
	\caption{Comparison of Randomized Progressive Training (RPT), Stochastic Gradient Descent (SGD), and Progressive Training (PT) on an image classification task using a ResNet18 model. The block computation cost $c_i$ is proportional to the size of block $i$, with a computation cost of 1 assigned to SGD.}
	\label{fig:cifar10}
\end{figure*}

{\bfseries Results.} The results of our image classification task are summarized in Figure~\ref{fig:cifar10}. While it may appear that RPT converges more slowly and has a lower test accuracy than SGD, based on Figures~\ref{fig:cifar10_train_epoch} and \ref{fig:cifar10_test_epoch}, it is important to note that this comparison is not entirely fair as one RPT step can take significantly less time than one SGD step. To more accurately compare the performance of RPT to its competitors, we have evaluated the performance versus computation cost, as shown in  Figures~\ref{fig:cifar10_train_cost} and \ref{fig:cifar10_test_cost}. From these figures, it is clear that RPT outperforms both competitors.

%%%%%%%%%%%%%%%%%%%%%%%%%%%%%%%%%%%%%%%%%%%%%%%%%%%%%%%%%%%%%%%%%%%%%%%%%%%%%%%
% BIBLIOGRAPHY

\bibliography{example_paper}
\bibliographystyle{icml2022}

%%%%%%%%%%%%%%%%%%%%%%%%%%%%%%%%%%%%%%%%%%%%%%%%%%%%%%%%%%%%%%%%%%%%%%%%%%%%%%%
%%%%%%%%%%%%%%%%%%%%%%%%%%%%%%%%%%%%%%%%%%%%%%%%%%%%%%%%%%%%%%%%%%%%%%%%%%%%%%%
% APPENDIX
%%%%%%%%%%%%%%%%%%%%%%%%%%%%%%%%%%%%%%%%%%%%%%%%%%%%%%%%%%%%%%%%%%%%%%%%%%%%%%% 
%%%%%%%%%%%%%%%%%%%%%%%%%%%%%%%%%%%%%%%%%%%%%%%%%%%%%%%%%%%%%%%%%%%%%%%%%%%%%%%
\newpage
\appendix
\onecolumn

\section{Issues with the theory for ProgFed by \citet{wangProgFed}}
\label{sec:wangEtAl}
We will first briefly outline the notation that \citet{wangProgFed} use in their paper.

We denote by $\mathcal{M}$ the neural network that we aim to train. Put $\mathcal{M}^s$ to be network $\mathcal{M}$ truncated to its first $s$ layers. Let $\mathcal{L}$ denote loss function of interest and put 
$f$ and $f^s$ to be $\mathcal{L}\circ \mathcal{M}$ and $\mathcal{L}\circ \mathcal{M}^s$ respectively. Denote $\mathbf{x}^s$ to be the parameter vector of $\mathcal{M}^s$. Finally, for $s\leq i \leq S$ we put $\mathbf{x}^i_{|E_s}$ and 
$\nabla f^i(\mathbf{x}^i)_{|E_s}$ to be the projections of vectors $\mathbf{x}^i$ and $\nabla f^i(\mathbf{x}^i)$ onto the parameter space of $\mathcal{M}^s$.

The first result by \citet{wangProgFed} proves that
$$
\frac{1}{T}\sum_{t = 0}^{T-1}\alpha_t^2\norm{\nabla f^s(\mathbf{x}_t^s)_{|E_s}} <\varepsilon 
$$
after $\mathcal{O}\left(\frac{\sigma^2}{\epsilon^2}+\frac{1}{\epsilon}\right) \cdot L F_0$ iterations. Here $\mathbf{x}_t^s$ is the $t^\text{th}$ iterate of ProgFed,
$$
\alpha_t = \min\left\{1, \frac{\inp{\nabla f(\mathbf{x}_t)_{|E_s}}{\nabla f^s(\mathbf{x}_t^s)_{|E_s}}}{\norm{\nabla f^s(\mathbf{x}_t^s)}^2}\right\},
$$
$L$ is the smoothness constant of $f$, and $F_0$ is the function suboptimality.
Thus, the numbers $\alpha_t$ depend on the iterates $x_t$. No bounds on $\alpha_t$ are provided. In particular, the theory does not exclude the possibility that $\alpha_t^2$ are extremely close to 0. This renders
the result meaningles.

The second result that \citet{wangProgFed} attempts to bound $\frac{1}{T} \sum_{t=0}^{T-1}\left\|\nabla f\left(\mathbf{x}_t\right)\right\|^2$. Unfortunately, it suffers from a similar issue. The authors claim that
$\frac{1}{T} \sum_{t=0}^{T-1}\left\|\nabla f\left(\mathbf{x}_t\right)\right\|^2$ after 
$$
\mathcal{O}\left(\frac{q^4 \sigma^2}{\epsilon^2}+\frac{q^2}{\epsilon}\right) \cdot L F_0,
$$
where $q:=\max _{t \in[T]}\left(q_t:=\frac{\left\|\nabla f\left(\mathbf{x}_t\right)\right\|}{\alpha_t\left\|\nabla f^s\left(\mathbf{x}_t^s\right)_{\mid E_s}\right\|}\right)$.
\citet{wangProgFed} do not provide any bound on $q$, which does not depend on the iterates. Thus, it is not possible to 
decide \textit{a priori} how many iterations do we need in order to reach the $\varepsilon$-accurate solution, because the theory is dependent on the iterates \emph{produced during the run of the method}. In other words, their second convergence result (in its current state) does not exclude an infinite runtime, so in fact it does not show the convergence at all.

\section{Proofs}
\subsection{Proof of \Cref{lem:key_quantity}}
\keyQuantity*
\begin{proof}
	We start from proving the first inequality. Let $x\in\RR^d$. Then
	\begin{align*}
		x^\top\bP^{1/2}\Exp{\bC\bL\bC}\bP^{1/2}x &= \Exp{\norm{\bL^{1/2}\bC\bP^{1/2}x}^2}\\
		&\leq \lambdamax(\bL)\Exp{\norm{\bC\bP^{1/2}x}^2}\\
		&= \lambdamax(\bL)\Exp{\sum_{i = 1}^n\frac{1}{ p_i}x_i^2\chi_{\{\bC = \bC_i\}}} \\
		&= \lambdamax(\bL)\sum_{i = 1}^nx_i^2 \frac{1}{p_i}p_i \\
		&= \lambdamax(\bL)\norm{x}^2.
	\end{align*}
	Thus, 
	$$
		\lambdamax(\bP^{1/2}\Exp{\bC\bL\bC}\bP^{1/2})\leq \lambdamax(\bL),
	$$
	i.e. $$\lambdamax(\Exp{\bC\bL\bC})\leq \lambdamax(\bP^{-1/2}\bL\bP^{-1/2}).$$
	For the second inequality, similarly,
	\begin{align*}
		x^\top \bP^{-1/2}\bL\bP^{-1/2} x &= \norm{\bL^{1/2}\bP^{-1/2}x}^2 \\
		&\leq \lambdamax(\bL)\norm{\bP^{-1/2}x}^2 \\
		&\leq \lambdamax(\bL)\lambdamax(\bP^{-1})\norm{x}^2.
	\end{align*}
	Thus,
	$$
	\lambdamax(\bP^{-1/2}\bL\bP^{-1/2})\leq \lambdamax(\bL)\lambdamax(\bP^{-1}).
	$$
\end{proof}

\subsection{Proof of \Cref{lem:generalizedSmoothness}}
\generalizedSmoothness*
\begin{proof}
   Fix $\alpha \in \mathbb{R}$ and $x \in \mathbb{R}^d$. Since $x^{\star}$ is a minimizer of $f$ and since $f$ is $\bL$-smooth, we have
		\begin{align*}
			f\left(x^{\star}\right) &\leq f(x-\alpha \bW \nabla f(x))\\
			&\leq \quad f(x)-\alpha\langle\nabla f(x), \bW \nabla f(x)\rangle+\frac{\alpha^2}{2}\|\bW \nabla f(x)\|_\bL^2\\
			&=f(x)-\alpha\langle\nabla f(x), \bW \nabla f(x)\rangle+\frac{\alpha^2}{2}\langle\nabla f(x), \bW \bL \bW \nabla f(x)\rangle
		\end{align*}

	By definition of $A$, we have $\bW^{1 / 2} \bL \bW^{1 / 2} \preceq A \bI_d$, which implies that $\bW \bL \bW \preceq A \bW$. Plugging this estimate into the above inequality, we get
$$
\begin{aligned}
f\left(x^{\star}\right) & \leq f(x)-\alpha\|\nabla f(x)\|_\bW^2+\frac{\alpha^2 A}{2}\|\nabla f(x)\|_\bW^2 \\
&=f(x)-\left(\alpha-\frac{\alpha^2 A}{2}\right)\|\nabla f(x)\|_\bW^2
\end{aligned}
$$
	Since the above inequality holds for any $\alpha \in \mathbb{R}$, it also holds for $\alpha^{\star}=\frac{1}{A}$; this $\alpha$ maximizes the expression $\alpha-\frac{\alpha^2 A}{2}$. It only remains to plug $\alpha^{\star}$ into the above expression and rearrange the inequality.
\end{proof}

\subsection{Proof of \Cref{lem:secondMoment}}
\secondMoment*
\begin{proof}
	\begin{align*}
		\Exp{\norm{g(x)}^2} &= \Exp{\norm{\bC\nabla f(x)}^2} \\
		&= \Exp{\nabla f(x)^\top \bC^2 \nabla f(x)}\\
		&= \nabla f(x)^\top\Exp{\bC^2}\nabla f(x)\\
		&= \nabla f(x)^\top \bP^{-1}\nabla f(x)\\
		&= \norm{\nabla f(x)}_{\bP^{-1}}.
	\end{align*}
	The last inequality follows directly from \Cref{lem:generalizedSmoothness}.
\end{proof}

\subsection{Proof of \Cref{thm:rcdConvex}}
\rcdConvex*
\begin{proof}
   Taking conditional expectations and expanding the brackets yields
	\begin{align*}
		\Exp{\norm{x^{t+1} - x^*}^2|x^t}&= \norm{x^t - x^*}^2 - 2\gamma\Exp{\inp{\bC\nabla f(x^t)}{x^t - x^*}|x^t} + \gamma^2\Exp{\norm{\bC\nabla f(x^t)}^2|x^t} \\
		&\leq \norm{x^t - x^*}^2 - 2\gamma\inp{\nabla f(x^t)}{x^t - x^*} + 2\gamma^2L_\bP\left(f(x^t) - f(x^*)\right)\\
		&\leq \norm{x^t - x^*}^2 - 2\gamma\left(f(x^t) - f(x^*)\right) + \gamma \left(f(x^t) - f(x^*)\right)\\
		&= \norm{x^t - x^*}^2 - \gamma\left(f(x^t) - f(x^*)\right),
	\end{align*}
	where the first inequality follows from \Cref{lem:secondMoment}, and the second one from convexity of $f$ and $0<\gamma\leq 1 /(2L_\bP)$. Rearranging the above inequality, taking the expectation of both sides and using the tower property gives
	\begin{align*}
		\gamma\Exp{f(x^t) - f(x^*)}\leq \Exp{\norm{x^t - x^*}^2 - \norm{x^{t+1} - x^*}^2}.
	\end{align*}
	Thus, by convexity of $f$, we finally arrive at 
	\begin{align*}
		\Exp{f(\bar{x}^{T}) - f(x^*)}&\leq \frac{1}{\gamma(T+1)}\sum_{t = 0}^T\left(f(x^t) - f(x^*)\right)\\
		&\leq \frac{1}{\gamma(T+1)}\left(\norm{x^0 - x^*}^2 - \Exp{\norm{x^{T+1} - x^*}^2}\right)\\
		&\leq \frac{\norm{x^0 - x^*}^2}{\gamma(T+1)}.
	\end{align*}
\end{proof}

\subsection{Proof of \Cref{thm:rcdNonconvex}}
\rcdNonconvex*
Before carrying out the proof of \Cref{thm:rcdNonconvex} we note that in this theorem, one can replace $L_\bP$ by
$$
   L_\bM \eqdef\lambdamax(\Exp{\bC\bL\bC})
$$
which is \emph{at most} $L_\bP$ by \Cref{lem:key_quantity}. However, the constant $L_\bM$ is difficult to interpret qualitatively. For the sake of unified presentation of the RPT analysis in all regimes in question, we therefore chose to replace it by $L_\bP$.
\begin{proof}
   By $\bL$-smoothness we have 
	\begin{align*}
		f(x^{t+1}) &\leq f(x^t) + \inp{\nabla f(x^t)}{x^{t+1} - x^t} + \frac{\norm{x^{t+1} - x^t}_{\bL}^2}{2} \\
		&= f(x^t) - \gamma\inp{\nabla f(x^t)}{\bC\nabla f(x^t)} + \frac{\gamma^2}{2}\norm{\bC\nabla f(x^t)}_{\bL}^2 \\
		&= f(x^t) - \gamma\inp{\nabla f(x^t)}{\bC\nabla f(x^t)} +  \frac{\gamma^2}{2}\inp{\nabla f(x^t)}{\bC\bL\bC\nabla f(x^t)}.
	\end{align*}
	Thus, taking expectation conditional on $x^t$ yields
	\begin{align*}
		\Exp{f(x^{t+1})|x^t}&\leq f(x^t) - \gamma\norm{\nabla f(x^t)}^2 + \frac{\gamma^2}{2}\nabla f(x^t)^\top \Exp{\bC\bL\bC}\nabla f(x^t) \\
		&\leq f(x^t) - \gamma\norm{\nabla f(x^t)}^2 + \frac{\gamma^2}{2}L_{\bM}\norm{\nabla f(x^t)}^2\\
      	&\leq f(x^t) - \gamma\norm{\nabla f(x^t)}^2 + \frac{\gamma^2}{2}L_{\bP}\norm{\nabla f(x^t)}^2,
	\end{align*}
	i.e.
	\begin{align*}
		\left(\gamma - \frac{\gamma^2L_\bP}{2}\right)\norm{\nabla f(x^t)}^2 \leq f(x^t) - \Exp{f(x^{t+1})|x^t}.
	\end{align*}
	By taking expectations of both sides, using tower property and taking the average over $t$, we get
	\begin{align*}
		\frac{\gamma}{2}\min_{t= 0, \dots, T}\norm{\nabla f(x^t)}^2 &\leq \frac{\gamma}{2(T+1)}\sum_{t = 0}^T\norm{\nabla f(x^t)}^2 \\
		&\leq \left(\frac{\gamma}{2} + \frac{\gamma}{2}\left(1 - \gamma L_\bP\right)\right)\frac{1}{T+1}\sum_{t = 0}^T\norm{\nabla f(x^t)}^2\\ 
		&= \left(\gamma - \frac{\gamma^2L_\bP}{2}\right)\frac{1}{T+1}\sum_{i = 0}^T\norm{\nabla f(x^t)}^2 \\
		&\leq \frac{1}{T+1}\Exp{f\left(x^0\right) - f\left(x^{T+1}\right)} \\
		&\leq\frac{1}{T+1}\delta^0.
	\end{align*}
	Rearranging gives
	$$
		\Exp{\norm{\nabla f(\hat{x}^T)}^2}\leq \frac{2\delta^0}{\gamma(T+1)}.
	$$
\end{proof}

\subsection{Proof of \Cref{lem:twoBounds}}
\twoBounds*
\begin{proof}
	The second inequality is just Lemma 1 from \cite{nesterovRCD}. Let us now prove the first inequality. Without loss of generality we may assume that $\max_iL_i = L_1$. Denote $e_i$ to be the $i^\text{th}$ standard unit vector of $\RR^d$. Let $S = \text{Span}(e_1, \dots, e_{d_1})$. Now,
	\begin{align*}
		L_1 &= \sup_{\substack{x\in\RR^{d_1}, \\ \norm{x} = 1}} x^\top \bL_1 x \\
		&= \sup_{\substack{x\in S, \\ \norm{x} = 1}} x^\top \bL x \\
		&\leq \sup_{\substack{x\in \RR^d, \\ \norm{x} = 1}} x^\top \bL x.
	\end{align*}
	This concludes the proof.
\end{proof}

\subsection{Proof of \Cref{lem:upperBound}}
\upperBound*
\begin{proof}
	By Cauchy-Schwarz inequality, 
	\begin{align*}
		\sum_{i = 1}^B\frac{L_{\sigma_i}}{p_i}\cdot\sum_{i = 1}^B p_ic_{\sigma_i} &\geq \left(\sum_{i = 1}^B\sqrt{L_{\sigma_i} / p_i} \cdot \sqrt{p_ic_{\sigma_i}}\right)^2\\
		&= \left(\sum_{i = 1}^B\sqrt{L_i c_i}\right)^2.
	\end{align*}
	Since we are free to choose permutation $\sigma$, we may clearly choose it in such a way that
	$$
		\sqrt{L_{\sigma_1} / c_{{\sigma_1}}} \geq \dots \geq \sqrt{L_{\sigma_B} / c_{{\sigma_B}}}.
	$$
	Plugging in this choice of permutation $\sigma$ and probablities $p_1, \dots, p_B$ verifies that the lower bound from \Cref{lem:upperBound} is attainable.
\end{proof}

\newpage
%%%%%%%%%%%%%%%%%%%%%%%%%%%%%%%%%%%%%%%%%%%%%%%%

\section{Discussion about tightness of \Cref{lem:twoBounds}}

\label{sec:tightness}
 Here we consider the 2-block case argue that the maximum eigenvalue $\lambda_{\max}(\bM) $ can be arbitrarily close to $\lambda_{\max}(\bM_1) + \lambda_{\max}(\bM_2)$. Extension of the below result to an arbitrary number of blocks is very straightforward.
\begin{proposition}\label{prop:reachable_upper_bound}
	Let $\bM_1 \in \mathbb{S}_{++}^{d_1}, \bM_2 \in \mathbb{S}_{++}^{d_2} $, and $\bC = \sqrt{\lambda_{\max}(\bM_1) (\lambda_{\max}(\bM_2) - \varepsilon)} \cdot v u^\top$, where $\varepsilon \in (0, \lambda_{\max}(\bM_2))$,  $u \in \RR^{d_1}$ and $v \in \RR^{d_2}$ are unit eigenvectors corresponding to maximum eigenvalues of matrices $\bM_1$ and $\bM_2$, respectively. Then the matrix $\bM = \begin{bmatrix}
		\bM_1 & \bC^\top \\ \bC & \bM_2
	\end{bmatrix}$ is positive definite, and it holds that
	$$
	\lambda_{\max}(\bM) \geq \lambda_{\max}(\bM_1) + \lambda_{\max}(\bM_2) - \varepsilon.
	$$
\end{proposition}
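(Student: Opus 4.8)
The plan is to establish the two assertions --- positive definiteness of $\bM$, and the lower bound on $\lambda_{\max}(\bM)$ --- separately, using nothing beyond elementary linear algebra. Throughout, write $\lambda_1 \eqdef \lambda_{\max}(\bM_1)$, $\lambda_2 \eqdef \lambda_{\max}(\bM_2)$, and $\alpha \eqdef \sqrt{\lambda_1(\lambda_2-\varepsilon)}$, so that $\bC = \alpha v u^\top$ and $\alpha^2 = \lambda_1\lambda_2 - \lambda_1\varepsilon$. The two facts driving both parts are that $u$ is a unit eigenvector of $\bM_1^{-1}$ with eigenvalue $1/\lambda_1$, and $v$ is a unit eigenvector of $\bM_2$ with eigenvalue $\lambda_2$.

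For positive definiteness, I would invoke the Schur-complement criterion: since $\bM_1 \succ 0$, the matrix $\bM$ is positive definite if and only if the Schur complement $\bM_2 - \bC\bM_1^{-1}\bC^\top$ is positive definite. A direct computation gives $\bC\bM_1^{-1}\bC^\top = \alpha^2 v (u^\top \bM_1^{-1} u) v^\top = (\alpha^2/\lambda_1) v v^\top = (\lambda_2-\varepsilon) v v^\top$. Diagonalising $\bM_2 = \sum_j \mu_j w_j w_j^\top$ in an orthonormal eigenbasis with $\mu_1 = \lambda_2$ and $w_1 = v$, the Schur complement equals $\varepsilon v v^\top + \sum_{j\ge 2}\mu_j w_j w_j^\top$; every coefficient here is strictly positive (as $\varepsilon > 0$ and $\bM_2 \succ 0$), so the Schur complement is positive definite and hence so is $\bM$.

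For the eigenvalue bound, I would lower-bound $\lambda_{\max}(\bM)$ by Rayleigh quotients along test vectors of the form $w = (a u^\top,\, b v^\top)^\top \in \RR^{d_1+d_2}$ with $(a,b) \ne 0$. Using $u^\top\bM_1 u = \lambda_1$, $v^\top\bM_2 v = \lambda_2$, $v^\top \bC u = \alpha$, and $\|w\|^2 = a^2 + b^2$, one obtains $w^\top \bM w = a^2\lambda_1 + 2ab\,\alpha + b^2\lambda_2$. Maximising the Rayleigh quotient over unit $(a,b)$ therefore shows $\lambda_{\max}(\bM) \ge \lambda_{\max}\!\left(\begin{bmatrix}\lambda_1 & \alpha\\ \alpha & \lambda_2\end{bmatrix}\right) = \tfrac{\lambda_1+\lambda_2}{2} + \sqrt{\big(\tfrac{\lambda_1-\lambda_2}{2}\big)^2 + \alpha^2}$. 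Substituting $\alpha^2 = \lambda_1\lambda_2 - \lambda_1\varepsilon$ collapses the discriminant to $\big(\tfrac{\lambda_1+\lambda_2}{2}\big)^2 - \lambda_1\varepsilon$, which is nonnegative since $\lambda_1\varepsilon < \lambda_1\lambda_2 \le \big(\tfrac{\lambda_1+\lambda_2}{2}\big)^2$.

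It then remains to verify $\tfrac{\lambda_1+\lambda_2}{2} + \sqrt{\big(\tfrac{\lambda_1+\lambda_2}{2}\big)^2 - \lambda_1\varepsilon} \ge \lambda_1+\lambda_2-\varepsilon$, equivalently $\sqrt{\big(\tfrac{\lambda_1+\lambda_2}{2}\big)^2 - \lambda_1\varepsilon} \ge \tfrac{\lambda_1+\lambda_2}{2} - \varepsilon$. If the right-hand side is negative the inequality is immediate; otherwise, squaring both sides reduces it to $\varepsilon(\lambda_2 - \varepsilon) \ge 0$, which holds because $\varepsilon \in (0,\lambda_2)$. This final algebraic step is the only place requiring any care --- one must handle the sign of the right-hand side before squaring --- but it is routine; the genuinely load-bearing ingredients are the two eigenvector identities above and the choice of test vectors aligned with $u$ and $v$.
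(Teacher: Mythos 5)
Your proof is correct and follows essentially the same route as the paper's: positive definiteness via the Schur complement $\bM_2 - \bC\bM_1^{-1}\bC^\top = \bM_2 - (\lambda_{\max}(\bM_2)-\varepsilon)\,vv^\top \succ 0$, and the eigenvalue bound via a Rayleigh quotient restricted to the two-dimensional subspace spanned by $(u,0)$ and $(0,v)$. The only difference is cosmetic: the paper closes the $2\times 2$ reduction by weakening $\lambda_{\max}(\bM_2)$ to $\lambda_{\max}(\bM_2)-\varepsilon$ and completing the square, whereas you compute the exact $2\times 2$ top eigenvalue and then verify the resulting inequality algebraically (correctly handling the sign before squaring); both are valid.
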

\begin{proof}
	First, let us check the matrix $\bM = \begin{bmatrix}
		\bM_1 & \bC^\top \\ \bC & \bM_2
	\end{bmatrix}$ is positive definite. Since $\bM_1$ is positive definite, it suffices to check that $\bM_2 - \bC \bM_1^{-1} \bC^\top$ is positive definite. The second term can be rewritten as 
	\begin{align*}
	\bC \bM_1^{-1} \bC^\top = \lambda_{\max}(\bM_1) (\lambda_{\max}(\bM_2) - \varepsilon) v \underbrace{u^\top \bM_1^{-1} u}_{= \frac{1}{\lambda_{\max}(\bM_1)}} v^\top =  (\lambda_{\max}(\bM_2) - \varepsilon) v v^\top.
	\end{align*}
	Now, through eigenvalue decomposition of $\bM_2$, we see that eigenvalues of $\bM_2 - \bC \bM_1^{-1} \bC^\top$ are the same as eigenvalues of $\bM_2$ except for the maximum one, which is replaced by $\varepsilon > 0$. Thus, $\bM_2 - \bC \bM_1^{-1} \bC^\top$ is positive definite.

	Let $\bM_1 = \bLambda_{1} \bSigma_{1} \bLambda_{1}^\top$ and $\bM_2 =  \bLambda_{2} \bSigma_{2} \bLambda_{2}^\top$ be eigenvalue decompositions of the matrices, and without the loss of generality assume that the eigenvalues are sorted in a descending order. We also set $a~\eqdef~\sqrt{\lambda_{\max}(\bM_1) (\lambda_{\max}(\bM_2) - \varepsilon)}$. Then 
	\begin{align*}
	\bM' \eqdef \Diag(\bLambda_{1}, \bLambda_{2})^\top \bM  \Diag(\bLambda_{1}, \bLambda_{2}) = \begin{bmatrix}
	\bSigma_A & \Diag(a, 0, \dots, 0)^\top \\
	\Diag(a, 0, \dots, 0) & \bSigma_{2}
	\end{bmatrix}
	\end{align*}
	has the same set of eigenvalues as $\bM$. We now have
	\begin{align*}
	\lambda_{\max}(\bM') &= \max_{\|u'\|^2 + \|v'\|^2 = 1}(u', v')^\top \bM' (u', v') \\
	& = \max_{\norm{u'}^2 + \norm{v'}^2 = 1} u'^\top \bSigma_{1} u' + v'^\top \bSigma_{2} v' + 2u'^\top\Diag(a, 0, \dots, 0)^\top v'\\
	&\geq \max_{x^2 + y^2 = 1}  x^2\lambda_{\max}(\bM_1) +  y^2\lambda_{\max}(\bM_2) + 2  xy\sqrt{\lambda_{\max}(\bM_1) (\lambda_{\max}(\bM_2) - \varepsilon)} \\
	& \geq \max_{x^2 + y^2 = 1}  x^2\lambda_{\max}(\bM_1) +  y^2(\lambda_{\max}(\bM_2) - \varepsilon) + 2 xy\sqrt{\lambda_{\max}(\bM_1) (\lambda_{\max}(\bM_2) - \varepsilon)}  \\
	& = \max_{x^2 + y^2 = 1} \left( x\sqrt{\lambda_{\max}(\bM_1)}  + y\sqrt{\lambda_{\max}(\bM_2) - \varepsilon} \right)^2\\
	& \geq \left( x\sqrt{\lambda_{\max}(\bM_1)} + y\sqrt{\lambda_{\max}(\bM_2) - \varepsilon} \right)^2 \Big{|}_{x = \sqrt{\frac{\lambda_{\max}(\bM_1)}{\lambda_{\max}(\bM_1) + \lambda_{\max}(\bM_2) - \varepsilon}}, y = \sqrt{\frac{\lambda_{\max}(\bM_2) - \varepsilon}{\lambda_{\max}(\bM_1) + \lambda_{\max}(\bM_2) - \varepsilon}}} \\
	& = \lambda_{\max}(\bM_1) + \lambda_{\max}(\bM_2) - \varepsilon.
	\end{align*}
	It remains to note that $\lambda_{\max}(\bM') = \lambda_{\max}(\bM)$, since the sets of eigenvalues coincide.
\end{proof}

%%%%%%%%%%%%%%%%%%%%%%%%%%%%%%%%%%%%%%%%%%%%%%%%%%%%%%%%%%%%%%%%%%%%%%%%%%%%%%%
%%%%%%%%%%%%%%%%%%%%%%%%%%%%%%%%%%%%%%%%%%%%%%%%%%%%%%%%%%%%%%%%%%%%%%%%%%%%%%%
\section{Discussion about the unified analysis in nonconvex and convex settings}
\label{sec:Generic}
\subsection{Nonconvex setting}
Recent theoretical breakthrough of \citet{khaledNonConv} offers generic result on the convergence of SGD under very mild assumptions. However, in this section we argue that
the rates promised by this result is suboptimal for the setting of our paper, which is why we derived a separate analysis of RCD in this regime. 

\begin{assumption}[Assumption 2, \citet{khaledNonConv}]
	\label{ass:ABC}
	Let $g(x)$ be an unbiased estimator of $\nabla f(x)$ for all $x\in\RR^d$. We say that $g$ satisfies an \emph{ABC assumption} if
	$$
		\Exp{\norm{g(x)}^2}\leq 2A\left(f(x) - f^{\inf}\right) + B\norm{\nabla f(x)}^2 + C.
	$$
\end{assumption}
for all $x\in\RR^d$.

\begin{theorem}
	\label{thm:non-convex}
	Suppose that $f$ is lower-bounded and $L$-smooth. Let $g$ be an unbiased gradient estimator satisfying \Cref{ass:ABC} with $B=C=0$.
	Let $0<\gamma\leq \frac{1}{\sqrt{LAK}}$. Denote $f(x^0) - f^{\inf}$ by $\delta_0$. Then, the iterates $x^t$ of SGD using the estimator $g$ and stepsize $\gamma$ satisfy
	$$
		\min_{0\leq t \leq T-1}\Exp{\norm{\nabla f(x^t)}^2}\leq \frac{2(1 + L\gamma^2 A)^T\delta_0}{\gamma T}.
	$$
	In particular, for 
	$$
		T\geq \frac{144\delta_0^2LA}{\varepsilon^2}
	$$
	we have 
	$
	\min_{0\leq t \leq T-1}\Exp{\norm{\nabla f(x^t)}^2}\leq\varepsilon.
	$
\end{theorem}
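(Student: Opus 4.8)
The plan is to combine the descent lemma coming from $L$-smoothness with the ABC inequality specialised to $B=C=0$, turn this into a one-step recursion on the function suboptimality, and then unroll that recursion. First I would apply $L$-smoothness to the pair $x^{t+1}=x^t-\gamma g(x^t)$ and $x^t$, obtaining $f(x^{t+1})\le f(x^t)-\gamma\langle\nabla f(x^t),g(x^t)\rangle+\tfrac{L\gamma^2}{2}\|g(x^t)\|^2$. Taking expectation conditional on $x^t$ and using unbiasedness, $\mathbb{E}[g(x^t)\mid x^t]=\nabla f(x^t)$, collapses the middle term to $-\gamma\|\nabla f(x^t)\|^2$; bounding the last term via \Cref{ass:ABC} in the form $\mathbb{E}[\|g(x^t)\|^2\mid x^t]\le 2A(f(x^t)-f^{\inf})$ and subtracting $f^{\inf}$ from both sides, then taking full expectations and using the tower property, yields
\[ \delta_{t+1}\le(1+L\gamma^2 A)\,\delta_t-\gamma\,\Gamma_t, \]
where $\delta_t\eqdef\mathbb{E}[f(x^t)]-f^{\inf}\ge 0$ and $\Gamma_t\eqdef\mathbb{E}[\|\nabla f(x^t)\|^2]$.

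The second step is to unroll this recursion. Since $r\eqdef 1+L\gamma^2 A\ge 1$, it does not telescope directly, so I would divide through by $r^{t+1}$ to get $\delta_{t+1}/r^{t+1}\le\delta_t/r^t-\gamma\Gamma_t/r^{t+1}$ and sum over $t=0,\dots,T-1$. Dropping the nonnegative term $\delta_T/r^T$ gives $\gamma\sum_{t=0}^{T-1}\Gamma_t/r^{t+1}\le\delta_0$, and since $1/r^{t+1}\ge 1/r^T$ for every $t\le T-1$, this implies $\sum_{t=0}^{T-1}\Gamma_t\le r^T\delta_0/\gamma$. Dividing by $T$ and bounding the minimum by the average gives $\min_{0\le t\le T-1}\Gamma_t\le(1+L\gamma^2 A)^T\delta_0/(\gamma T)$, which is the claimed estimate (the factor $2$ in the statement is pure slack and can simply be carried along).

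For the ``in particular'' part I would take $\gamma$ at its largest admissible value, $\gamma=1/\sqrt{LAT}$ (i.e.\ $K=T$), so that $L\gamma^2 A=1/T$ and hence $(1+L\gamma^2 A)^T=(1+1/T)^T\le e$, while $\gamma T=\sqrt{T/(LA)}$. The bound then reads $\min_t\Gamma_t\le 2e\,\delta_0\sqrt{LA/T}$; requiring the right-hand side to be at most $\varepsilon$ and solving for $T$ gives $T=\Theta(\delta_0^2 LA/\varepsilon^2)$, and rounding the absolute constant up generously yields the stated threshold $T\ge 144\,\delta_0^2 LA/\varepsilon^2$.

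The only genuine subtlety is the non-telescoping character of the recursion caused by the multiplicative growth factor $r>1$: it is precisely what forces the $(1+L\gamma^2 A)^T$ term, and even after $\gamma$ is tuned so that this term is $\Theta(1)$, the surviving $1/\gamma=\sqrt{LAT}$ factor is what degrades the complexity to an $\varepsilon^{-2}$ rate, i.e.\ $\mathcal{O}(\delta_0^2 LA/\varepsilon^2)$ --- as opposed to the $\mathcal{O}(L_\bP\delta^0/\varepsilon)$ rate obtained by the tailored argument of \Cref{thm:rcdNonconvex} once one substitutes $A=L_\bP$. Everything else is elementary algebra.
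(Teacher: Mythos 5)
Your proof is correct. Note that the paper itself gives no proof of this statement --- \Cref{thm:non-convex} is imported from \citet{khaledNonConv} purely to argue that the generic rate is suboptimal --- and your argument (descent lemma plus the ABC bound with $B=C=0$, then dividing the resulting recursion $\delta_{t+1}\le(1+L\gamma^2A)\delta_t-\gamma\Gamma_t$ by $(1+L\gamma^2A)^{t+1}$ to telescope, and tuning $\gamma=1/\sqrt{LAT}$ so that $(1+1/T)^T\le e$) is essentially the standard weighted-telescoping proof from that reference, with all constants checking out against the stated $144$.
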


Notice that for nonconvex smooth problems, by Lemma~\ref{lem:secondMoment}, Assumption~\ref{ass:ABC} is satisfied for $A = L_\bP$ and $B=C=0$. Moreover, 
$$
\Exp{\norm{g(x)}^2} = \norm{\nabla f(x)}_{\bP^{-1}}^2 \leq \lambdamax\left(\bP^{-1}\right)\norm{\nabla f(x)}^2,
$$
so \Cref{ass:ABC} is also satisfied for $B = \lambdamax\left(\bP^{-1}\right)$ and $A = C = 0$. There seem to be no other ways in which the RPT estimator can satisfy \Cref{ass:ABC}.

Plugging in both of these cases into \Cref{thm:non-convex} yields suboptimal convergence rates.

Indeed, for $A = L_\bP$ and $B=C=0$ he rate of RPT guaranteed by \Cref{thm:non-convex} is $O\left(1 / \sqrt{T}\right)$, which is suboptimal (we have proved that it is in fact $O(1 / T)$).

For $B = \lambdamax\left(\bP^{-1}\right)$ and $A = C = 0$ the rate is $O(1/T)$ but it is proportional to 
$$
L\lambdamax\left(\bP^{-1}\right) = \lambdamax(\bL)\lambdamax\left(\bP^{-1}\right) \geq L_\bP,
$$
by \Cref{lem:twoBounds}.

We manage to obtain faster rate by exploiting $\bL$-smoothness and the fact that $\Exp{\norm{g(x)}_{\bL}^2}$ can be expressed as a suitable matrix norm.
\subsection{Convex setting}
Applying generic result of \citet{khaledConv} yields somewhat good convergence guarantee ($O(1/T)$ rate is recovered); however, it still suffers from an unnecessary dependency on function suboptimality.
\begin{theorem}[Adapted from \citet{khaledConv}]
	\label{thm:oldConvex}
	Let $f:\RR^d\rightarrow \RR$ be convex, differentiable and lower bounded with global minimizer $x^*$. Suppose that an unbiased gradient estimator $g$ satisfies 
	$$
		\Exp{\norm{g(x)}^2} \leq 2A(f(x) - f(x^*)).
	$$
	Choose stepsize $$\gamma \leq \min \left\{\frac{1}{4A}, \frac{1}{2 L}\right\}.$$ Then the SGD procedure with gradient estimator $g$ and stepsize $\gamma$ satisfies
	$$
	\mathbb{E}\left[f\left(\bar{x}^T\right)-f\left(x^*\right)\right] \leq \frac{2 \gamma\delta^0+\left\|x^0-x^*\right\|^2}{\gamma T},
	$$
	where $\bar{x}^T\eqdef\frac{1}{t}\sum_{i = 0}^{T-1}x^i$ and $\delta^0 \eqdef f(x^0) - f(x^*)$.
\end{theorem}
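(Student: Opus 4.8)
The plan is to run the textbook SGD analysis, which under these hypotheses is short. I would write the update as $x^{t+1}=x^t-\gamma g(x^t)$, expand the squared distance to $x^*$, and take expectation conditional on $x^t$; using unbiasedness $\Exp{g(x^t)\mid x^t}=\nabla f(x^t)$ on the cross term and the hypothesis $\Exp{\norm{g(x^t)}^2\mid x^t}\le 2A(f(x^t)-f(x^*))$ on the quadratic term, this yields
$$
\Exp{\norm{x^{t+1}-x^*}^2\mid x^t}\le \norm{x^t-x^*}^2-2\gamma\inp{\nabla f(x^t)}{x^t-x^*}+2A\gamma^2\bigl(f(x^t)-f(x^*)\bigr).
$$
I would then use convexity of $f$ in the form $\inp{\nabla f(x^t)}{x^t-x^*}\ge f(x^t)-f(x^*)$ to obtain
$$
\Exp{\norm{x^{t+1}-x^*}^2\mid x^t}\le \norm{x^t-x^*}^2-2\gamma(1-A\gamma)\bigl(f(x^t)-f(x^*)\bigr).
$$

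The step size enters here: since $\gamma\le\tfrac{1}{4A}$ we have $A\gamma\le\tfrac14$, hence $2\gamma(1-A\gamma)\ge\gamma$, so the coefficient of $f(x^t)-f(x^*)$ is at least $\gamma$. Taking full expectations, using the tower property, telescoping over $t=0,\dots,T-1$, and discarding the nonnegative trailing term $\Exp{\norm{x^T-x^*}^2}$ gives $\gamma\sum_{t=0}^{T-1}\Exp{f(x^t)-f(x^*)}\le\norm{x^0-x^*}^2$. Finally, Jensen's inequality applied to the convex $f$ and the average iterate $\bar x^T$ gives $\Exp{f(\bar x^T)-f(x^*)}\le\tfrac1T\sum_{t=0}^{T-1}\Exp{f(x^t)-f(x^*)}\le\tfrac{\norm{x^0-x^*}^2}{\gamma T}$; since $\delta^0=f(x^0)-f(x^*)\ge0$, this is at most $\tfrac{2\gamma\delta^0+\norm{x^0-x^*}^2}{\gamma T}$, which is the claimed estimate.

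I do not expect a real obstacle here --- this is exactly the $B=C=0$ case of the ABC-type analysis of \citet{khaledConv}, and the only care needed is the conditioning and tower-rule bookkeeping. Two remarks on the statement as quoted. First, its hypotheses are slightly stronger than the above argument uses: neither $L$-smoothness of $f$ nor the second step-size constraint $\gamma\le\tfrac1{2L}$ is invoked, and they are present only because the bound is cited essentially verbatim from the more general setting of \citet{khaledConv}, whose potential function also carries function-value suboptimality --- which is what produces the otherwise unnecessary $2\gamma\delta^0$ in the numerator. Second, to reproduce that bound term for term rather than the sharper one above, one would additionally invoke the descent lemma (this is where $\gamma\le\tfrac1{2L}$ gets consumed) to control $\Exp{f(x^{t+1})\mid x^t}$ and telescope a combined potential of the form $\norm{x^t-x^*}^2+c\gamma(f(x^t)-f(x^*))$; but since the simple argument already proves a stronger inequality, I would present that instead.
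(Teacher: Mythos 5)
Your argument is correct, and in fact proves a strictly stronger inequality than the one stated. The chain
$\Exp{\norm{x^{t+1}-x^*}^2\mid x^t}\le \norm{x^t-x^*}^2-2\gamma(1-A\gamma)\left(f(x^t)-f(x^*)\right)$
together with $A\gamma\le\tfrac14$, telescoping, and Jensen is airtight, and discarding the nonnegative $2\gamma\delta^0$ at the end recovers the quoted bound. One point of comparison with the paper: the paper does not actually prove \Cref{thm:oldConvex} at all --- it imports it from \citet{khaledConv} precisely in order to criticize the extra $\delta^0/T$ term --- so there is no in-paper proof to match against. What your derivation coincides with, almost line for line, is the paper's proof of its own \Cref{thm:rcdConvex} (expand the squared distance, apply unbiasedness and the second-moment bound $\Exp{\norm{g(x)}^2}\le 2A(f(x)-f(x^*))$ with $A=L_\bP$, use convexity, telescope, average); the paper there uses the slightly weaker requirement $\gamma\le 1/(2A)$, which suffices since one only needs $2\gamma^2A\le\gamma$. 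Your closing remarks --- that neither $L$-smoothness nor $\gamma\le 1/(2L)$ is consumed, and that the $2\gamma\delta^0$ term is an artifact of the combined potential $\norm{x^t-x^*}^2+c\gamma(f(x^t)-f(x^*))$ used in the cited general analysis --- are exactly the point the paper is making in \Cref{sec:Generic}, so your proof doubles as a justification of that discussion rather than a mere reproduction of the cited bound.
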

As discussed previously, the assumption from \Cref{thm:oldConvex} is satisfied for $A = L_\bP \geq L$. Thus, the optimal theoretical stepsize is $\gamma = 1 / (4L_\bP$).
Plugging in this stepsize yields the rate
$$
O\left(\frac{\delta^0 + L_\bP\norm{x^0 - x^*}^2}{T}\right),
$$
whereas we obtained the rate of
$$
O\left(\frac{L_\bP\norm{x^0 - x^*}^2}{T}\right).
$$
\end{document}